\newtheorem{definition}{Definition}[section]
\newtheorem{lemma}{Lemma}[section]
\newtheorem{theorem}{Theorem}[section]
\newtheorem{proposition}{Proposition}[section]
    \newtheoremstyle{TheoremNum}
        {\topsep}{\topsep}              
        {\itshape}                      
        {}                              
        {\bfseries}                     
        {.}                             
        { }                             
        {\thmname{#1}\thmnote{ \bfseries #3}}
    \theoremstyle{TheoremNum}
    \newtheorem{thmn}{Theorem}
\DeclareMathOperator*{\argmax}{argmax}
\title{Fast Amortized Inference and Learning in Log-linear Models with Randomly Perturbed Nearest Neighbor Search}
\author{} 
\author{{\bf Stephen Mussmann\thanks{\hspace{0.2pc} Both authors contributed equally.}} , {\bf Daniel Levy$^*$}, {\bf Stefano Ermon}    \\
Department of Computer Science \\
Stanford University\\
Stanford, CA 94305 \\
\texttt{\{mussmann,danilevy,ermon\}@cs.stanford.edu} \\
}
\begin{document}

\maketitle

\begin{abstract}
Inference in log-linear models scales linearly in the size of output space in the worst-case. This is often a bottleneck in natural language processing and computer vision tasks when the output space is feasibly enumerable but very large. We propose a method to perform inference in log-linear models with sublinear amortized cost. Our idea hinges on using Gumbel random variable perturbations and a pre-computed Maximum Inner Product Search data structure to access the most-likely elements in sublinear amortized time. Our method yields provable runtime and accuracy guarantees. Further, we present empirical experiments on ImageNet and Word Embeddings showing significant speedups for sampling, inference, and learning in log-linear models.
\end{abstract}

\section{INTRODUCTION}

Log-linear models are widely used in machine learning and statistics. These models receive their name from the fact that the log unnormalized probabilities are linear in the parameters and the sufficient statistics. Since the probabilities are defined up to scaling, inference and learning require computing the normalization constant, also known as the partition function \citep{murphy2012machine}.

While defining unnormalized probabilities affords modeling flexibility, it comes at the price of computation time. For factorized models, there are many methods, such as Gibbs sampling and variational inference \citep{koller2009probabilistic}, to approximately perform inference. Here we are interested in the setting where the output space is not factorizable, and is large but enumerable (e.g., a few million elements). Such problems with large output spaces occur in many areas including computer vision and natural language processing (NLP) \citep{joulin2016learning, bengio2003neural,mikolov2013distributed}.
While inference is tractable (by brute force, in time linear in the size of the output space), it can be a major bottleneck in learning and even at test time in resource-constrained settings. 
Clearly, computation time cannot be saved for a \emph{single} inference query as it requires linear time to examine the input. However, as \citet{mussmann2016learning} establishes, computation time can be saved for a \emph{sequence} of related queries, e.g., sampling from log-linear models with the same sufficient statistics but different (changing) parameters. Such sequences of queries arise naturally in learning and at test time.

In this work, we employ Gumbel random variables to convert sampling into maximizing unnormalized log-probabilities perturbed by Gumbel noise, applied independently to each element in the output space \citep{hazan2013sampling,maddison2014sampling,kim2016exact}. Naively, sampling a Gumbel for each element requires linear runtime which yields no savings. However, we introduce a novel way to lazily instantiate the Gumbel random variables. In order to maximize the Gumbel-perturbed objective, we only examine a (small) subset of the most-likely states and a small number of Gumbel perturbations (the largest ones). This yields asymptotic runtime improvements with provable accuracy guarantees. To find the most likely states, which involves the maximization of the dot product between the parameters and the sufficient statistics, we are able to make use of the large literature on Maximum Inner Product Search \citep{shrivastava2014asymmetric,auvolat2015clustering,douze2016polysemous,ram2012maximum,koenigstein2012efficient}.

The contributions of this work are as follows.

\begin{itemize}
\item We present a method to perform sampling using access to the top $O(\sqrt{n})$ most likely states (where $n$ is the number of states), using the Gumbel max trick.
\item We present a method to estimate the partition function and expected values using access to the top $O(\sqrt{n})$ values and relying on uniform sampling.
\item We present a way to use Maximum Inner Product Search (MIPS) techniques to retrieve the approximate top $O(\sqrt{n})$ elements to provably achieve sublinear amortized query time.
\item We demonstrate applications of our method in computer vision and NLP where we achieve $5$--$10\times$ per-query speedups compared to the naive method.
\end{itemize}

\section{BACKGROUND}
\label{sec:background}

\subsection{LOG-LINEAR MODELS}

Log-linear models are widely used  in machine learning and artificial intelligence. Generally, any exponential family distribution can be written as a log-linear model. As examples, very common models like multinomial logistic regression and maximum entropy models \citep{koller2009probabilistic,murphy2012machine} are log-linear models. Additionally, the last layer of a neural network (softmax) is a log-linear model, e.g. sampling the next element of a sequence for recurrent neural networks.

In this work, we focus on discrete distributions over a set of states $\mathcal{X}$. For a log-linear model, the log unnormalized probabilities are linear in the parameters. More precisely, if the parameters are $\theta$ and the features (sufficient statistics) for an element $x \in \mathcal{X}$ are $\phi(x)$, then,
\begin{equation}
\Pr(x; \theta) \propto e^{\theta \cdot \phi(x)}
\end{equation}
Note that in order to define a distribution, we must normalize these probabilities by 
\begin{equation}
Z_\theta = \sum_{x \in \mathcal{X}} e^{\theta \cdot \phi(x)}
\end{equation}
which is known as the partition function. Unfortunately, computing the partition function $Z$ is expensive as it requires summing over all elements in $\mathcal{X}$. We can also learn a log-linear model by maximizing the likelihood of some training data where evaluating the gradient requires computing the expected value of the sufficient statistics.

\textbf{Assumption:} In our setting, $\mathcal{X}$ is large but feasibly enumerable, so naively computing the partition function is tractable but computationally expensive. 

As an example, in the experimental results section, $|\mathcal{X}| \approx 10^6$. As a negative example, Markov Random Fields can be written as a log-linear model but have an exponentially large $\mathcal{X}$ and thus are not amenable to our method.

\subsection{GUMBEL VARIABLE}
In the context of extremal statistics, \citet{gumbel1954statistical} defines the Gumbel distribution as
\begin{equation}
\Pr(G < x) = \exp(-\exp(-x))
\end{equation}
We can sample a Gumbel random variable using the following scheme,
\begin{equation}
U \sim \text{Uniform}(0,1)
\end{equation}
\begin{equation}
G = -\ln(-\ln(U))
\end{equation}
Our use of the Gumbel distribution is motivated by the so-called ``Gumbel Trick'' which involves adding Gumbel noise to the log unnormalized probabilities to turn sampling from a log-linear model into finding the maximizing element. 

\begin{proposition}[\citep{hazan2013sampling,maddison2014sampling}]
\label{prop:gumbel-max}
For Gumbel variables $G_x$ sampled i.i.d. for each data point $x$,
\begin{equation}
\argmax_x \theta \cdot \phi(x) + G_x \sim \mathrm{Categorical}(\{\frac{e^{\theta \cdot \phi(x)}}{Z_\theta} \}_x)
\end{equation}
\end{proposition}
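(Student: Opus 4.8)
The plan is to reduce the statement to the standard fact about which of several independent exponential random variables is smallest. Write $w_x = \theta \cdot \phi(x)$, so the target distribution assigns mass proportional to $e^{w_x}$ to state $x$, with $Z_\theta = \sum_x e^{w_x}$. First I would record the exact law of a shifted Gumbel: if $G$ has CDF $\exp(-\exp(-t))$, then for any constant $w$,
\[
\Pr(w + G \le t) = \exp\bigl(-e^{w} e^{-t}\bigr).
\]
Equivalently — and this is the form I would actually use — $e^{-(w+G)}$ is exponentially distributed with rate $e^{w}$, since $\Pr\bigl(e^{-(w+G)} > z\bigr) = \Pr(G < w - \ln z) = \exp(-e^{w} z)$ for $z > 0$.

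Next, since $t \mapsto e^{-t}$ is strictly decreasing, $\argmax_x\, (w_x + G_x) = \argmin_x\, e^{-(w_x + G_x)}$. By the previous step and independence of the $G_x$, the family $\{E_x := e^{-(w_x+G_x)}\}_{x \in \mathcal{X}}$ consists of independent exponential variables with rates $\{e^{w_x}\}_x$. Because $\mathcal{X}$ is finite and each $E_x$ is continuously distributed, almost surely no two coincide, so the $\argmin$ is a well-defined single index with probability one.

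Then I would invoke (or derive in one line) the competing-exponentials identity: for independent $E_x \sim \mathrm{Exponential}(\lambda_x)$,
\[
\Pr\Bigl(\argmin_x E_x = i\Bigr) = \int_0^\infty \lambda_i e^{-\lambda_i t} \prod_{x \ne i} e^{-\lambda_x t}\, dt = \int_0^\infty \lambda_i\, e^{-(\sum_x \lambda_x) t}\, dt = \frac{\lambda_i}{\sum_x \lambda_x}.
\]
Substituting $\lambda_x = e^{w_x}$ yields $\Pr\bigl(\argmax_x (w_x + G_x) = i\bigr) = e^{w_i} / Z_\theta$, which is exactly the claimed Categorical law.

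A direct alternative that avoids the exponential change of variables is to condition on the value $t$ of the winning coordinate $w_i + G_i$, whose density is $e^{w_i} e^{-t}\exp(-e^{w_i}e^{-t})$, multiply by $\Pr(w_x + G_x \le t\ \text{for all } x \ne i) = \exp\bigl(-(Z_\theta - e^{w_i})e^{-t}\bigr)$, and integrate over $t \in \mathbb{R}$; the substitution $u = e^{-t}$ collapses the integral to $e^{w_i}/Z_\theta$. Either route is essentially a computation: there is no serious obstacle, and the only points needing care are the max-stability structure of the Gumbel (packaged here as the exponential representation), the a.s.\ uniqueness of the maximizer, and the routine justification that the conditioning/integration manipulations are legitimate since all the relevant variables have densities.
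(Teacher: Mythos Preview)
Your argument is correct and complete. Note, however, that the paper does not actually prove Proposition~\ref{prop:gumbel-max}: it is stated with citations to \citet{hazan2013sampling} and \citet{maddison2014sampling} and used as a black box, so there is no ``paper's own proof'' to compare against. What you have written is one of the standard derivations of the Gumbel-max trick (the competing-exponentials reduction), and your alternative of integrating the density of the winner against the CDFs of the losers is the other standard route; both are fine.

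One small slip: in the line $\Pr\bigl(e^{-(w+G)} > z\bigr) = \Pr(G < w - \ln z)$, the event should be $G < -w - \ln z$, not $G < w - \ln z$. Your final expression $\exp(-e^{w} z)$ is nonetheless correct, so this is a typo rather than a gap.
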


\subsection{MAXIMUM INNER PRODUCT SEARCH}\label{subsec:mips}

A common computational task is retrieving the nearest neighbor to a query from a database of vectors. More specifically, we are given a database of vectors on which we can perform preprocessing and build a data structure, and then, we receive a sequence of queries $\{q_i\}_i$, and for each query, we use the data structure to compute the element in the database that is most similar to $q_i$. Note that the structure of this problem depends on the similarity measure between vectors.

If $n$ is the number of vectors, we can trivially create an $O(n)$ algorithm (per query): for every query $q$, iterate through the entire database and find the vector that is most similar to $q$. Remarkably, for Euclidean distance and cosine similarity, it is possible to achieve \emph{amortized sublinear} query runtime \citep{indyk1998approximate,charikar2002similarity}.

Because of applications in log-linear models, we will be interested in using the inner product as the similarity measure. This is known as the Maximum Inner Product Search (MIPS) task.

\begin{definition}[Maximum Inner Product Search]
Given a set of vectors $V = \{v_1, ..., v_n\}$ the MIPS task is to respond to a query vector $q$ with
\begin{equation}
\argmax_{v \in V} q \cdot v
\end{equation}
\end{definition}

One common class of techniques for solving MIPS are space-partitioning methods such as k-d trees \citep{bentley1975multidimensional}. \citet{ram2012maximum} and \citet{koenigstein2012efficient} introduce space-partitioning methods based on a branch and bound technique to solve the MIPS problem. Unfortunately, it has been observed that such tree-based methods suffer from the curse of dimensionality \citep{shrivastava2014asymmetric}.

Clustering is another approach for solving the MIPS task \citep{auvolat2015clustering,douze2016polysemous}. For this technique, the database vectors are clustered during the preprocessing step. Then, at query time, the algorithm searches the clusters near $q$ for the most similar vector.

Another common class of techniques for MIPS are based on Local Sensitive Hashing  \citep{shrivastava2014asymmetric, neyshabur2014symmetric}, a method introduced by \citet{indyk1998approximate}. LSH only requires a family of hash functions with collision probabilities that are monotonic in the similarity. LSH works by combining these basic hashes to form longer hashes, and then building a hash table for each longer hash. Then, at query time, LSH hashes the query, retrieves elements from the colliding hash buckets, and computes the maximum over such elements. More precisely, define $\mathrm{Sim}(x,y)$ as the similarity between $x$ and $y$ and an $S$-neighbor to a query $q$ as a point $x$ such that $\mathrm{Sim}(q,x) \geq S$.

\begin{theorem}
Given a set $V$ of size $n$ with a similarity measure and hash family $\mathcal{H}$ such that for scalars $S_1 > S_2$ and $p_1 > p_2$,
\begin{itemize}
\item For any $x,y \in V$ where $\mathrm{Sim}(x,y)\geq S_1$, $\Pr_{h \in \mathcal{H}}[h(x) = h(y)] \geq p_1$
\item For any $x,y \in V$ where $\mathrm{Sim}(x,y)\leq S_2$, $\Pr_{h \in \mathcal{H}}[h(x) = h(y)] \leq p_2$
\end{itemize}
one can construct a data structure which, given any query $q$, does the following with high probability: if there exists a $S_1$-neighbor of $q$ in $V$, it returns a $S_2$-neighbor of $q$ in $V$. Further, this can be done with $O(n^\rho \log n)$ query time and $O(n^{1 + \rho})$ space where $\rho = \frac{\log p_1}{\log p_2} < 1$.

\end{theorem}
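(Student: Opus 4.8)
The plan is to reproduce the classical locality-sensitive hashing construction of \citet{indyk1998approximate}: amplify the gap between $p_1$ and $p_2$ by \emph{concatenating} basic hashes, and boost the probability of hitting a near point by \emph{replicating} hash tables. Concretely, set $k = \lceil \log_{1/p_2} n \rceil$ and, for $j = 1, \dots, L$ (with $L$ fixed below), draw $g_j = (h_{j,1}, \dots, h_{j,k})$ with each $h_{j,i}$ an independent sample from $\mathcal{H}$; in preprocessing, bucket every $v \in V$ under the key $g_j(v)$ in table $T_j$. Given a query $q$, inspect the points found in $T_j[g_j(q)]$ for $j = 1, \dots, L$, stop once $3L$ points have been inspected, and return the inspected point of largest similarity to $q$ (reporting failure if no point was inspected).

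Two estimates drive the analysis. First, the choice of $k$ controls far points: if $\mathrm{Sim}(q,v) \le S_2$ then $\Pr[g_j(v) = g_j(q)] \le p_2^k \le 1/n$, so the expected number of such points across all $L$ tables is at most $L$, and by Markov's inequality it exceeds $3L$ with probability at most $1/3$. Second, a guaranteed $S_1$-neighbor $x^\star$ collides well: $\Pr[g_j(x^\star) = g_j(q)] \ge p_1^k \ge p_1^{\log_{1/p_2} n} = n^{-\rho}$, so the probability that $x^\star$ lands in none of the $L$ queried buckets is at most $(1 - n^{-\rho})^L$.

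Taking $L = \lceil n^\rho \rceil$ makes this last quantity at most $1/e$, so by a union bound both bad events fail with probability at least $1 - 1/e - 1/3 > 0$, a positive constant. On that good event the algorithm succeeds: if the scan exhausts the queried buckets it has inspected $x^\star$ and hence outputs a point of similarity $\ge S_1 > S_2$; and if instead the scan is truncated at $3L$ points, then since fewer than $3L$ of those are far points at least one inspected point has similarity $\ge S_2$, so again the output is a valid $S_2$-neighbor. Constant success probability is upgraded to the ``with high probability'' guarantee by running $\Theta(\log n)$ independent copies of the whole structure and reporting the best answer. For the resource bounds: a query evaluates $O(L)$ composite hashes, each a concatenation of $k = O(\log n)$ elementary hashes, and inspects $O(L)$ points, for $O(Lk) = O(n^\rho \log n)$ time; each copy stores $L = O(n^\rho)$ tables with $n$ pointers apiece, for $O(nL) = O(n^{1+\rho})$ space.

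The one genuinely delicate point is the simultaneous choice of $k$ and $L$: increasing $k$ is what suppresses the far-point collision probability below $1/n$ and keeps the bucket scan cheap, but it equally suppresses the near-point collision probability $p_1^k$, and restoring a constant probability of finding $x^\star$ then forces $L \approx p_1^{-k} = n^\rho$ replicas --- which is exactly the origin of the exponent $\rho = \log p_1/\log p_2 < 1$. Everything else is bookkeeping: choosing the truncation constant ($3L$) so that the Markov bound leaves room, in the union bound, for the event that $x^\star$ is found, and tracking the $\log n$ factors coming from $k$ (and, if one insists on high-probability rather than expected guarantees, from the outer replication).
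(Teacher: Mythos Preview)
Your proposal is correct and is precisely the classical construction and analysis of \citet{indyk1998approximate}; the paper does not give its own argument here but simply defers to that reference, so you have essentially written out the proof the paper cites. The only nit is the off-by-one between ``exceeds $3L$'' and ``fewer than $3L$'' in your Markov step, which is harmless once you phrase the good event consistently.
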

\begin{proof}
See \citet{indyk1998approximate}
\end{proof}

This theorem states that if there is an $S_1$-close neighbor, the algorithm will find an $S_2$-close neighbor. Intuitively, this means that each LSH instance is ``tuned'' to a different similarity value. We can build a series of LSH instances ``tuned'' to different values so that we can find the largest element with high probability, no matter the similarity of the nearest neighbor. The theorem states that this can be done in sublinear time.

For the sublinear theoretical guarantees in this paper, we will rely on the reduction from MIPS to Maximum Cosines Similarity Search presented in \citet{neyshabur2014symmetric} which adds a single dimension to make all the database vectors have the same norm. For the cosine similarity search problem, we will rely on LSH techniques for cosine similarity search presented in \cite{charikar2002similarity} based on Signed Random Projections, a binary hash function based on the sign of the dot product with a random vector.

\section{METHOD}
\label{sec:method}

Suppose we have a log-linear model over a set of $n$ elements $\mathcal{X}$. We wish to perform sampling and inference in sublinear time. This cannot be done for a single value of the parameters $\theta$, but with preprocessing on $\mathcal{X}$, we can achieve sublinear amortized query time. 

Our method generally works for any distribution where
\begin{equation}
\Pr(i) = \frac{e^{y_i}}{\sum_j e^{y_j}}
\end{equation}
which encompasses all distributions with strictly positive probability mass. The requirement for our method is that we have access to the largest $O(\sqrt{n})$ values of $y_i$ in sublinear time. In particular, this method works for log-linear models where $y_i = \theta \cdot \phi(x_i)$ and we use Maximum Inner Product search techniques to access the top values. 

\subsection{SAMPLING}

Recall the Gumbel max technique from the background section. In particular, if we can compute the maximum element and value of $y_i + G_i$ for Gumbel variables $G_i$, the maximum element will be a sample from the model. We can construct a naive strategy  as follows: sample a Gumbel $G_i$ for each $y_i$ and iterate over all elements to find the maximum (perturbed) element. However, this algorithm's runtime is linear and provides no savings. 

Ideally, we would like to find a way to preprocess $\{y_i\}_{i=1}^n$ so that we can draw samples and perform inference quickly. \citet{mussmann2016learning} achieves this by performing preprocessing on fixed Gumbel samples and using MIPS techniques. This ``frozen'' Gumbel noise makes the samples very correlated; in fact, there are a small fixed number of possible samples for a given parameter value. We wish to find a way that allows us to sample fresh Gumbels for every sample, but only a sublinear number of them.

Intuitively, for an element to maximize $y_i + G_i$, either $y_i$ needs to be large or $G_i$ needs to be large, so we only need to examine indices where either $y_i$ or $G_i$ is large. We can find the largest $y_i$ by performing preprocessing (such as maximum inner product search) and we will find the largest $G_i$ by incorporating a lazy evaluation strategy for the Gumbel variables to only require an expected number of $O(\sqrt{n})$ samples.

First, we describe the method intuitively and with a figure before diving into the details. Let $S$ be the set of the largest $O(\sqrt{n})$ elements of $\{y_i\}_{i=1}^n$. First, we will sample Gumbel values for these $S$ largest elements. Note that the minimal $y_i$ in $S$ is an upper bound on the $y_i$ value of elements not in $S$. Further, for an element not in $S$ to have the overall maximal $y_i + G_i$, it must exceed the maximal $y_i + G_i$ for elements in $S$, which is quickly computable. Thus, we have a lower bound on what the value of a Gumbel must be to perturb a point not in $S$ to be the overall maximum. We can lazily sample large Gumbels that exceed this gap, which we will show there will not be too many in expectation. Then, we randomly assign these large Gumbels to the tail of the distribution and check if any of them exceed the maximal $y_i + G_i$ from the largest elements $S$. See Figure \ref{fig:lazy_sample}.

\begin{figure}
\centering
\includegraphics[width=\columnwidth]{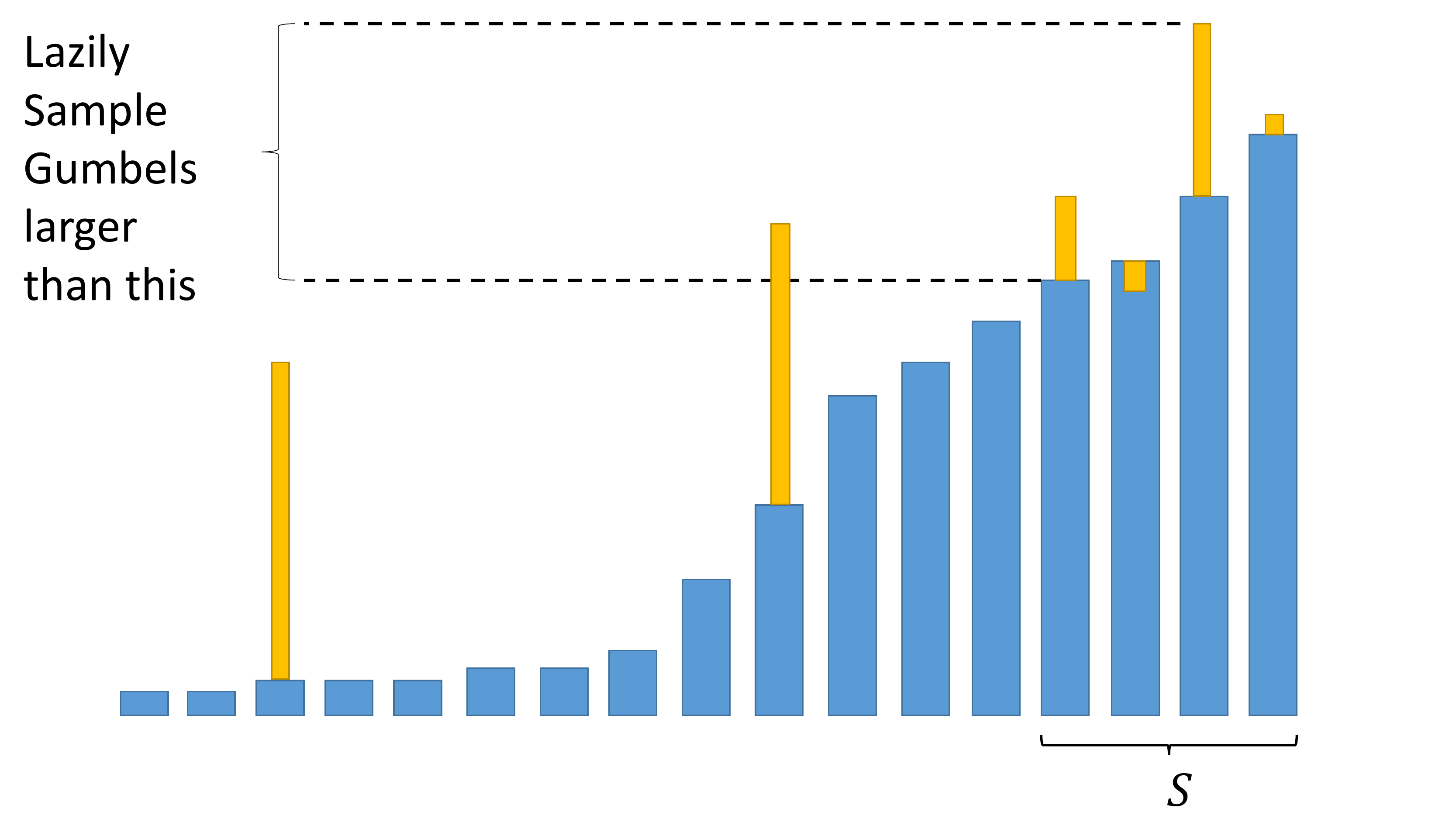}
\caption{The values of $y_i$ are shown sorted in blue while the Gumbel noise is shown in yellow. We sample a Gumbel for each element of the set $S$ of the largest $y_i$. Then, we compute the minimum value that a Gumbel must have to yield a candidate solution, represented by the difference between the dotted lines. Finally, we lazily sample Gumbels larger than this value for elements not in $S$.}
\label{fig:lazy_sample}
\end{figure}

Note that our method requires the top $k=O(\sqrt{n})$ elements of $\{y_i\}_{i=1}^n$ which we will refer to as $S$. For lazy sampling the large Gumbels, we will use the fact that a Gumbel can be represented as $G_i = -\ln(-\ln(U_i))$. Then we can sample the number of Gumbels that exceed a threshold $B$ by sampling the number of $U_i$ such that $U_i > \exp(-\exp(-B))$ and then can conditionally sample $U_i > \exp(-\exp(-B))$. Precisely, our method involves several steps shown in Algorithm \ref{alg:sampling}. 

\begin{algorithm}[tb]
   \caption{Fast Sampling with Lazy Gumbels}
   \label{alg:sampling}
\begin{algorithmic}
  \STATE {\bfseries Input:} $\{y_i\}_{i=1}^n$, $S$ as the top $k$ values of $y_i$
   \STATE Sample $k$ Gumbel variables $G_i$ for $i \in S$
   \STATE Compute $M = \max_{i \in S} y_i + G_i$ 
   \STATE Compute $S_{\mathrm{min}} = \min_{i \in S} y_i$
   \STATE Compute the Gumbel cutoff $B = M - S_{\mathrm{min}}$.
   \STATE Sample $m \sim \mathrm{Binomial}(n-k, 1 - \exp(-\exp(-B))$ as the number of $|\mathcal{X} \setminus S|$ Gumbels with value $>B$
   \STATE Uniformly sample $m$ points from $\mathcal{X} \setminus S$ and denote $T$
   \STATE Sample Gumbels that are conditionally $G_i > B$ for the points $i \in T$ (sample $U_i \sim \mathrm{Uniform}(\exp(-\exp(-B)),1)$)
   \STATE $\hat{x} = \argmax_{i \in S \cup T} y_i + G_i$
   \STATE {\bfseries return} Sample $\hat{x}$
\end{algorithmic}
\end{algorithm}

\begin{theorem}
\label{thm:correctness}
For Algorithm \ref{alg:sampling}, $\hat{x}$ is an exact sample from $\Pr(i) \propto e^{y_i}$.
\end{theorem}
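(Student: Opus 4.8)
The plan is to show that Algorithm~\ref{alg:sampling} computes exactly $\argmax_{i \in \mathcal{X}} y_i + G_i$ where $G_i$ are i.i.d.\ Gumbel variables, so that correctness follows immediately from Proposition~\ref{prop:gumbel-max}. The key observation is that the algorithm is really just a \emph{lazy} way of generating the full collection $\{G_i\}_{i=1}^n$ together with the argmax, and I would organize the argument around an equivalence between two sampling procedures: (i) the ``eager'' procedure that draws all $n$ Gumbels and maximizes, and (ii) the procedure in the algorithm.

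First I would fix the set $S$ of the top $k$ indices (deterministic given $\{y_i\}$), draw $G_i$ for $i \in S$, and set $M = \max_{i\in S} y_i + G_i$ and $S_{\min} = \min_{i \in S} y_i$ and $B = M - S_{\min}$, exactly as in the algorithm. The crucial deterministic fact is: for any $i \notin S$, we have $y_i \le S_{\min}$, hence $y_i + G_i > M$ implies $G_i > M - y_i \ge M - S_{\min} = B$. Therefore the overall argmax is either some index in $S$ (achieving value $M$) or an index $i \notin S$ with $G_i > B$; no index of the tail with $G_i \le B$ can ever be the maximizer, so the algorithm is allowed to ignore those Gumbels entirely without affecting the argmax. (Ties occur with probability zero since the Gumbel is continuous, so I can ignore them.)

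Next I would argue that the algorithm correctly samples the sub-collection of tail Gumbels that exceed $B$. Since the $G_i$ for $i \notin S$ are i.i.d.\ with $\Pr(G_i > B) = 1 - \exp(-\exp(-B))$ and are \emph{independent of everything computed so far} (because $S$, $M$, $B$ depend only on $\{y_i\}$ and on $\{G_i\}_{i \in S}$), the number $m$ of tail indices with $G_i > B$ is $\mathrm{Binomial}(n-k,\,1-\exp(-\exp(-B)))$, and conditioned on $m$ the identities of those indices are a uniformly random $m$-subset of $\mathcal{X}\setminus S$ — which is exactly how $T$ is drawn. Finally, for $i \in T$, the conditional law of $G_i$ given $G_i > B$ is obtained by the inverse-CDF relation $G_i = -\ln(-\ln U_i)$ with $U_i \sim \mathrm{Uniform}(\exp(-\exp(-B)),1)$, since $G_i > B \iff U_i > \exp(-\exp(-B))$; this matches the sampling step for $T$. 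Hence $\{(i, G_i) : i \in S\} \cup \{(i,G_i): i \in T\}$ has exactly the same joint distribution as the restriction of a full i.i.d.\ Gumbel draw to $S$ together with all tail indices whose Gumbel exceeds $B$, and taking $\argmax_{i \in S \cup T} y_i + G_i$ returns the true global argmax. Applying Proposition~\ref{prop:gumbel-max} concludes that $\hat{x} \sim \mathrm{Categorical}(\{e^{y_i}/\sum_j e^{y_j}\})$.

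The main obstacle is the conditioning/independence bookkeeping in the middle step: one must be careful that $B$ is a function only of $\{y_i\}$ and $\{G_i\}_{i\in S}$, so that the tail Gumbels remain i.i.d.\ and independent of $B$, which legitimizes the Binomial count and the uniform choice of $T$; and one must verify that thinning an i.i.d.\ sequence by the event $\{G_i > B\}$ and then recording identities indeed yields a uniform random subset of the prescribed size. Everything else — the deterministic domination $y_i \le S_{\min}$ and the inverse-CDF identity — is routine.
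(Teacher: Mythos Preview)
Your proposal is correct and follows essentially the same approach as the paper: reduce to Proposition~\ref{prop:gumbel-max} by showing that any index $i \notin S \cup T$ satisfies $y_i + G_i < S_{\min} + B = M$ and hence cannot be the global argmax. The paper's proof is terser and simply asserts the lazy sampling is valid, whereas you explicitly verify the independence and conditioning that make the Binomial count, uniform choice of $T$, and truncated-Gumbel draws distributionally equivalent to restricting a full i.i.d.\ Gumbel draw; this is a point the paper leaves implicit but your care here is well placed.
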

\begin{proof}
This theorem would follow from Proposition \ref{prop:gumbel-max} if we prove that we are finding the maximum of $y_i + G_i$. Note that we do not evaluate the Gumbel for all of the elements in $\mathcal{X} - S - T$. Thus, the only way the lazy sampling strategy will fail is if one of these points is the true maximum. However, these points have Gumbel $G_i<B$ and since they aren't in $S$, $y_i<S_{\mathrm{min}}$. Together, this implies that $y_i + G_i < S_{\mathrm{min}} + B = M$ which is a value attained by a point in $S$. Therefore points not in $S \cup T$ cannot be the maximum.
\end{proof}

\subsubsection{Runtime}

Further, the runtime will be composed of two parts: retrieving the top $k$ elements $S$ and the runtime of Algorithm \ref{alg:sampling}. Let the cost of retrieving the top $k$ elements be $f(n,k)$. For Algorithm \ref{alg:sampling}, including the cost of retrieving $S$, the runtime will be $O(f(n,k) + m)$ and $m$ has a reasonable expected value.

\begin{theorem}
\label{thm:runtime}
For Algorithm \ref{alg:sampling}, $\mathbb{E}[m] \leq \frac{n}{k}$ 
\end{theorem}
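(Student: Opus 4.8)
The plan is to condition on the $k$ Gumbel variables $\{G_i\}_{i\in S}$ drawn in the first step, since everything that determines $B$ — namely $M=\max_{i\in S} y_i+G_i$ and $S_{\min}=\min_{i\in S} y_i$ — is a deterministic function of these and of the fixed values $\{y_i\}$. Conditionally on that information, $m\sim\mathrm{Binomial}(n-k,\,1-\exp(-\exp(-B)))$, so $\mathbb{E}[m\mid B]=(n-k)\bigl(1-\exp(-\exp(-B))\bigr)$ and hence $\mathbb{E}[m]=(n-k)\,\mathbb{E}\bigl[1-\exp(-\exp(-B))\bigr]$. Applying the elementary inequality $1-e^{-t}\le t$ for $t\ge 0$ with $t=\exp(-B)$, it suffices to show $\mathbb{E}\bigl[e^{-B}\bigr]\le 1/k$; then $\mathbb{E}[m]\le (n-k)/k\le n/k$ as claimed (and with $k=\Theta(\sqrt n)$ this gives the advertised $O(\sqrt n)$).

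The heart of the argument is therefore to control $\mathbb{E}[e^{-B}] = e^{S_{\min}}\,\mathbb{E}[e^{-M}]$, where $M=\max_{i\in S}(y_i+G_i)$ is a maximum of independent Gumbel-perturbed scores. I would compute $\mathbb{E}[e^{-M}]$ exactly rather than bound it crudely: by independence, $\Pr[M\le t]=\prod_{i\in S}\exp(-e^{-(t-y_i)})=\exp\!\bigl(-e^{-t}\textstyle\sum_{i\in S}e^{y_i}\bigr)$, i.e.\ $M$ is itself Gumbel with location $\ln\sum_{i\in S}e^{y_i}$. Writing $Z_S=\sum_{i\in S}e^{y_i}$ and differentiating the CDF gives density $Z_S e^{-t}\exp(-Z_S e^{-t})$, and the substitution $u=Z_S e^{-t}$ yields $\mathbb{E}[e^{-M}]=\frac{1}{Z_S}\int_0^\infty u e^{-u}\,du=1/Z_S$. (Equivalently one can invoke the max-stability of the Gumbel together with $\mathbb{E}[e^{-G}]=\mathbb{E}[-\ln U]=1$ for $U\sim\mathrm{Uniform}(0,1)$, since $e^{-G}=-\ln U$.)

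Finally I would assemble the pieces: since $S_{\min}$ is the smallest $y$-value in $S$, we have $y_i\ge S_{\min}$ for every $i\in S$, so $Z_S=\sum_{i\in S}e^{y_i}\ge k\,e^{S_{\min}}$. Hence $\mathbb{E}[e^{-B}]=e^{S_{\min}}/Z_S\le e^{S_{\min}}/(k\,e^{S_{\min}})=1/k$, completing the proof.

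The main obstacle is the second paragraph: $B$ is a random threshold, and the naive move is to lower-bound the maximum $M$ by a single term $y_{i_0}+G_{i_0}$, which only gives $\mathbb{E}[e^{-B}]\le 1$ and thus the far-too-weak bound $\mathbb{E}[m]\le n-k$. The factor-$k$ improvement comes precisely from using the \emph{full} distribution of the maximum of $k$ Gumbels, which pushes $M$ up near $\ln Z_S\ge \ln k + S_{\min}$; pinning down the constant requires the clean identity $\mathbb{E}[e^{-M}]=1/Z_S$, so getting that computation right (or correctly citing Gumbel max-stability) is the crux.
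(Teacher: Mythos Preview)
Your proof is correct and follows essentially the same approach as the paper: condition on the Gumbels in $S$, apply $1-e^{-t}\le t$, and exploit Gumbel max-stability together with $y_i\ge S_{\min}$ for $i\in S$ to extract the factor of $k$. The only cosmetic difference is the order of operations---the paper first bounds $M\ge S_{\min}+\max_{i\in S}G_i$ via stochastic dominance and then uses that $\max_{i\in S}G_i$ is Gumbel with location $\ln k$, whereas you compute the exact identity $\mathbb{E}[e^{-M}]=1/Z_S$ first and only then apply $Z_S\ge k\,e^{S_{\min}}$; your route is arguably cleaner and also gives the slightly sharper $(n-k)/k$.
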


The proof is in the appendix. Thus, the expected runtime for our method will be $O(f(n,k) + \frac{n}{k})$ which is sublinear if $k = \sqrt{n}$ and $f(n, \sqrt{n})$ is sublinear. 

\subsubsection{Fixed $B$}

Note that the technique above has a reasonable expected runtime but no runtime guarantees with high probability. To address this, we can fix $B$ to be a constant so that the value of $m$ is concentrated. 
Additionally, the technique shown in Algorithm \ref{alg:sampling} only works if $S_{\mathrm{min}}$ is an upper bound on elements not in $S$ which is brittle to errors in the MIPS technique. 

To address these issues, we define a related algorithm with a fixed Gumbel cutoff of $B = -\ln(-\ln(1 - l/n))$ so that there are on average $l$ Gumbel variables that exceed the cutoff. See Algorithm \ref{alg:robust_sampling}.

\begin{algorithm}[tb]
   \caption{Fast Sampling with Fixed $B$}
   \label{alg:robust_sampling}
\begin{algorithmic}
  \STATE {\bfseries Input:} $\{y_i\}_{i=1}^n$, $S$ as the top $k$ values of $y_i$, $l$
   \STATE Sample $k$ Gumbel variables $G_i$ for $i \in S$
   \STATE Set $B = -\ln(-\ln(1 - l/n))$
   \STATE Sample $m$ as the number of $|\mathcal{X} - S|$ Gumbels with value $>B$
   \STATE Uniformly sample $m$ points from $\mathcal{X} - S$ and call them $T$
   \STATE Sample Gumbels that are conditionally $G_i > B$ for the points $i \in T$
   \STATE $\hat{x} = \argmax_{i \in S \cup T} y_i + G_i$
   \STATE {\bfseries return} Sample $\hat{x}$
\end{algorithmic}
\end{algorithm}

Note that for $|S|=k$, the total runtime is $O(f(n,k) + m)$ where $f(n,k)$ is the runtime of gathering the top $k$ elements. Further $m \sim \mathrm{Binomial}(n, l/n)$ so with very high probability, $m<2l$ and the runtime is $O(f(n,k) + l)$ which will be sublinear if $f(n,k)$ and $l$ are sublinear.

\begin{theorem}
\label{thm:whp_correctness}
For Algorithm \ref{alg:robust_sampling}, the sample is an exact sample with probability $1 - \delta$ for $kl \geq n \ln(1/\delta)$. 
\end{theorem}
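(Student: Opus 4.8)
The plan is to mirror the argument of Theorem~\ref{thm:correctness}: reduce the event ``$\hat x$ is not an exact sample'' to a geometric condition on the perturbed scores $y_i+G_i$, and then bound the probability of that condition using only the $k$ fresh Gumbels drawn for $S$.

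First I would check that the lazy instantiation in Algorithm~\ref{alg:robust_sampling} is distributionally faithful. By the choice $B=-\ln(-\ln(1-l/n))$ we have $\Pr(G_i>B)=1-\exp(-\exp(-B))=l/n$, so drawing $m$ as the binomial count of tail Gumbels exceeding $B$, selecting $T\subseteq\mathcal{X}\setminus S$ uniformly among size-$m$ subsets, and drawing $G_i\mid G_i>B$ for $i\in T$ exactly reproduces the law of the pair (set of tail indices whose i.i.d.\ Gumbel exceeds $B$, together with those Gumbel values). Hence there is a coupling under which Algorithm~\ref{alg:robust_sampling} returns $\argmax_{i\in S\cup T}(y_i+G_i)$ for i.i.d.\ Gumbels $\{G_i\}_{i\in\mathcal{X}}$, with the remaining (never instantiated) Gumbels all $\le B$. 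By Proposition~\ref{prop:gumbel-max}, the returned $\hat x$ is then an exact sample whenever the global maximizer of $y_i+G_i$ over $\mathcal{X}$ lies in $S\cup T$, so it suffices to upper bound the probability that it does not.

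Next I would show that this failure forces all $k$ Gumbels attached to $S$ to lie below $B$. If the global maximizer $i^\dagger$ is outside $S\cup T$, then $G_{i^\dagger}\le B$ (it was not placed into $T$) and $y_{i^\dagger}\le S_{\mathrm{min}}$ (since $S$ is the exact top-$k$ set), so $y_{i^\dagger}+G_{i^\dagger}\le S_{\mathrm{min}}+B$. On the other hand $y_i\ge S_{\mathrm{min}}$ for every $i\in S$, hence $\max_{i\in S\cup T}(y_i+G_i)\ge\max_{i\in S}(y_i+G_i)\ge S_{\mathrm{min}}+\max_{i\in S}G_i$. Since $i^\dagger$ attains the global maximum, combining these two displays gives $\max_{i\in S}G_i\le B$; equivalently, failure implies $\max_{i\in S}G_i<B$.

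Finally, the $k$ Gumbels for $i\in S$ are i.i.d.\ with $\Pr(G_i<B)=\exp(-\exp(-B))=1-l/n$, so $\Pr(\mathrm{failure})\le(1-l/n)^k\le e^{-kl/n}$, and the hypothesis $kl\ge n\ln(1/\delta)$ makes this at most $\delta$. The two elementary inequalities in the last two paragraphs are routine; the one step that needs care is the faithfulness claim in the first paragraph, i.e.\ making precise that never instantiating the $n-k-m$ tail Gumbels loses nothing, so that ``failure'' is genuinely the event ``global argmax $\notin S\cup T$'' and the tail bound applies verbatim.
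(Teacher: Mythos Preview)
Your proposal is correct and follows essentially the same route as the paper's proof: both reduce the failure event to $\max_{i\in S}G_i\le B$ and then bound that probability by $e^{-kl/n}$. The only cosmetic difference is that the paper computes this bound via the closed-form law of the maximum of $k$ i.i.d.\ Gumbels (as $\ln k$ plus a single Gumbel), whereas you use independence directly to get $(1-l/n)^k\le e^{-kl/n}$; your explicit coupling paragraph is also slightly more careful than the paper's treatment of the lazy instantiation.
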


The proof is in the appendix. Thus, we can set $k=l \geq \sqrt{\ln(1/\delta)} \sqrt{n}$.

\subsection{PARTITION FUNCTION ESTIMATION}

Similar to sampling, we can estimate the partition function by using the top $k=O(\sqrt{n})$ elements $S$ and a uniform sample $T$ of $l=O(\sqrt{n})$ elements from the remaining elements. We combine these two sets to form an estimate of the partition function with relative error $\epsilon$. See Algorithm \ref{alg:partition}. 

\begin{algorithm}[tb]
   \caption{Partition Function Estimation}
   \label{alg:partition}
\begin{algorithmic}
  \STATE {\bfseries Input:} $\{y_i\}$, $S$ as the top $k$ values of $y_i$, $l$
   \STATE Uniformly sample $l$ elements with replacement from $[1,n] \setminus S$ and call it $T$
   \STATE $\hat{Z} = \sum_{i \in S} e^{y_i} + \frac{n - |S|}{|T|} \sum_{i \in T} e^{y_i}$
   \STATE {\bfseries return} Partition function estimate $\hat{Z}$
\end{algorithmic}
\end{algorithm}

\begin{theorem}
\label{thm:partition}
Algorithm \ref{alg:partition} returns an unbiased estimate $\hat{Z}$ and for $kl \geq \frac{2}{3} \frac{1}{\epsilon^2} n \ln(1/\delta)$, then with $1- \delta$ probability, 
\begin{equation}
\frac{|\hat{Z} - Z|}{Z} \leq \epsilon\end{equation}
\end{theorem}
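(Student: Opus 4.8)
The plan is to split the argument into the unbiasedness claim and the concentration claim. For unbiasedness, I would write $\hat Z = \sum_{i \in S} e^{y_i} + \frac{n-|S|}{|T|}\sum_{i \in T} e^{y_i}$ and note that the first term is a deterministic constant once $S$ is fixed, while each element of $T$ is drawn uniformly with replacement from the $n-|S|$ indices outside $S$. Hence $\mathbb{E}[e^{y_i}] = \frac{1}{n-|S|}\sum_{j \notin S} e^{y_j}$ for each $i \in T$, and multiplying by $\frac{n-|S|}{|T|}$ and summing over the $|T|=l$ samples recovers exactly $\sum_{j \notin S} e^{y_j}$; adding the first term gives $\mathbb{E}[\hat Z] = Z$.

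For the concentration claim I would use Bernstein's inequality (or the relative-error Chernoff bound of Theorem 2.3-type in the appendix, which the paper already invokes for Theorem \ref{thm:whp_correctness}). Write $\hat Z - Z = \frac{n-|S|}{l}\sum_{i \in T}\bigl(e^{y_i} - \mu\bigr)$ where $\mu = \frac{1}{n-|S|}\sum_{j\notin S}e^{y_j}$. The key structural fact is that, because $S$ holds the top $k$ values of $y_i$, every term outside $S$ satisfies $e^{y_i} \le e^{S_{\min}} \le \frac{1}{k}\sum_{j \in S} e^{y_j} \le \frac{Z}{k}$: the max of the tail is bounded by the average of the top-$k$, which is at most $Z/k$. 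This bounds both the range of each summand and its variance — roughly $\mathrm{Var}(e^{y_i}) \le \mathbb{E}[e^{2y_i}] \le \frac{Z}{k}\mu \le \frac{Z^2}{k(n-|S|)}$. Plugging the per-sample bound $|e^{y_i}-\mu| \le Z/k$ and the variance bound into Bernstein, scaling back by $\frac{n-|S|}{l}$, and asking the failure probability to be at most $\delta$ yields a condition of the form $l \gtrsim \frac{1}{\epsilon^2}\frac{n}{k}\ln(1/\delta)$, i.e. $kl \gtrsim \frac{1}{\epsilon^2} n \ln(1/\delta)$; chasing the constants through the variance-dominated regime of Bernstein produces the stated $\frac{2}{3}$ factor.

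I would then simplify using $n - |S| = n - k \le n$ wherever it makes the bound cleaner (this only loses constants and goes in the safe direction for the variance and range estimates, so the final $kl$ condition is unaffected up to the already-present constant). The main obstacle I anticipate is not any single inequality but making the constant come out to exactly $\frac{2}{3}$: this requires being careful about whether one bounds the second moment by $\frac{Z}{k}\mu$ or more crudely by $(Z/k)^2$, and about which form of Bernstein/Bennett is used — the $\frac{2}{3}$ strongly suggests the multiplicative Chernoff bound $\Pr[|\hat Z - Z| \ge \epsilon Z] \le 2\exp\bigl(-\tfrac{\epsilon^2}{3} \cdot (\text{effective sample count})\bigr)$ applied after rescaling the bounded variables $e^{y_i} \in [0, Z/k]$ to $[0,1]$, so I would set it up that way from the start. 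A secondary subtlety is that sampling is with replacement (stated in the algorithm), which is exactly what lets the $e^{y_i}$ be treated as i.i.d. and keeps the variance sum clean; I would flag that the with-replacement choice is what makes the independence in Bernstein immediate.
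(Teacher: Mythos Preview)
Your proposal is essentially the paper's argument: same unbiasedness computation, same structural fact that the tail is bounded by $Z/k$ (equivalently $\|S\|_1 \ge kr$ where $r=\max_{i\notin S}e^{y_i}$), and the same multiplicative Chernoff bound after rescaling to $[0,1]$.

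The one place where your outline diverges slightly from the paper is in the choice of rescaling, and this is exactly what controls the constant you flagged as the main obstacle. You rescale by $Z/k$; the paper rescales by $r$ itself, setting $q_i = e^{y_i}/r \in [0,1]$. With that normalization the target deviation becomes
\[
\Pr\bigl[\,|\bar Q - \mathbb{E}[Q]| > \tfrac{\epsilon Z}{(n-k)r}\,\bigr]
= \Pr\Bigl[\,|\bar Q - \mathbb{E}[Q]| > \epsilon\bigl(\mathbb{E}[Q] + \tfrac{\|S\|_1}{(n-k)r}\bigr)\,\Bigr]
\le \Pr\Bigl[\,|\bar Q - \mathbb{E}[Q]| > \epsilon\bigl(\mathbb{E}[Q] + \tfrac{k}{n}\bigr)\,\Bigr],
\]
where the last step uses $\|S\|_1 \ge kr$ (each of the $k$ top values is at least $r$). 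Applying the multiplicative Chernoff bound $\Pr[|\bar Q - \mathbb{E}[Q]|>a]\le 2\exp(-la^2/(3\mathbb{E}[Q]))$ with $a=\epsilon(\mathbb{E}[Q]+k/n)$ and then $(\mathbb{E}[Q]+k/n)^2/\mathbb{E}[Q]\ge 2k/n$ yields the exponent $-\tfrac{2}{3}\epsilon^2 kl/n$. Your rescaling by $Z/k$ collapses the $\|S\|_1$ term into the range bound up front and loses this additive $k/n$ in the deviation target, which is why chasing constants through your version would land on a slightly larger constant (roughly $2$ or $3$ rather than $\tfrac{3}{2}$) instead of reproducing the stated one. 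Otherwise the logic is identical.
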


The proof is in the appendix. If we set $k=l$, then the runtime is $O(\frac{1}{\epsilon} \sqrt{n} \sqrt{\ln(1/\delta)})$.

This is closely related to the heuristic presented in \citet{rastogi2015sublinear} as MIMPS. However, this is the first work that provides theoretical guarantees for the method and yields a theoretical understanding for the choice of $k$ and $l$.

\subsection{EXPECTED VALUE ESTIMATION}

In this section we show a way to estimate an expected value with respect to the distribution $\Pr(i) \propto e^{y_i}$. In particular, for bounded function values $\{f_i\}_{i=1}^n$ where $|f_i| \leq C$ we can define the expectation
\begin{equation}
F = \sum_i \frac{e^{y_i}}{Z} f_i
\end{equation}
where $Z = \sum_i e^{y_i}$. The algorithm we use to create an estimate is very similar to the partition function estimate. More specifically, we compute the largest $S$ values of $\{y_i\}_{i=1}^n$ and then draw uniform samples from the remaining elements and call it $T$. Then we compute an expected value using $S$ and $T$ (and upweighting the estimate from $T$). See Algorithm \ref{alg:expectation}.

\begin{algorithm}[tb]
   \caption{Expectation Estimation}
   \label{alg:expectation}
\begin{algorithmic}
  \STATE {\bfseries Input:} $\{y_i\}$, bounded function values $f_i$, $S$ as the top $k$ values of $y_i$, $l$
   \STATE Uniformly sample $l$ elements with replacement from $[1,n] \setminus S$ and call it $T$
   \STATE $\hat{Z} = \sum_{i \in S} e^{y_i} + \frac{n - |S|}{|T|} \sum_{i \in T} e^{y_i}$
   \STATE $\hat{J} = \sum_{i \in S} e^{y_i} f_i + \frac{n - |S|}{|T|} \sum_{i \in T} e^{y_i} f_i$
   \STATE $\hat{F} = \hat{J} / \hat{Z}$
   \STATE {\bfseries return} Expectation estimate $\hat{F}$
\end{algorithmic}
\end{algorithm}

This algorithm comes with a guarantee on the additive error.

\begin{theorem}
\label{thm:expectation}
Algorithm \ref{alg:expectation} returns an  estimate $\hat{F}$ such that $|\hat{F} - F| \leq \epsilon C$ with probability $\delta$ if
\begin{equation}
k^2 l \geq \frac{8n^2}{\epsilon^2} \log(4/\delta)
\end{equation}
\begin{equation}
kl \geq \frac{8}{3} \frac{1}{\epsilon^2} n \ln ( 2 / \delta )
\end{equation}
\end{theorem}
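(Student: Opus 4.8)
The plan is to turn the ratio estimate $\hat F = \hat J/\hat Z$ into a single centered-sum tail bound plus a reuse of the partition-function guarantee. First I would observe that $\hat Z$ and $\hat J$ are both unbiased (each is the exact mass of $S$ plus an importance-weighted uniform estimate of the tail, and uniform sampling with weight $(n-|S|)/|T|$ is unbiased by linearity), and, crucially, that
\[
\hat J - F\hat Z \;=\; \sum_{i\in S} e^{y_i}(f_i - F) \;+\; \frac{n-k}{l}\sum_{i\in T} e^{y_i}(f_i - F)
\]
is an \emph{unbiased estimator of $0$}, because $\sum_i e^{y_i}(f_i - F) = J - FZ = 0$ where $J := \sum_i e^{y_i} f_i = FZ$. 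Since $\hat F - F = (\hat J - F\hat Z)/\hat Z$, it then suffices to control two events: (A) $|\hat Z - Z| > \tfrac{\epsilon}{2}Z$, and (B) $|\hat J - F\hat Z| > \epsilon C(1-\tfrac{\epsilon}{2})Z$. On $A^c\cap B^c$ we have $\hat Z \ge (1-\tfrac\epsilon2)Z$, hence $|\hat F - F| \le \epsilon C$, and a union bound finishes the theorem if each of $\Pr(A),\Pr(B) \le \delta/2$.

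Event (A) is exactly Theorem~\ref{thm:partition} applied with accuracy $\epsilon/2$ and confidence $\delta/2$: its hypothesis becomes $kl \ge \tfrac23(\epsilon/2)^{-2} n\ln(2/\delta) = \tfrac{8}{3}\epsilon^{-2} n\ln(2/\delta)$, which is the second displayed condition, so $\Pr(A)\le\delta/2$.

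For event (B), the quantity $\hat J - F\hat Z$ equals, after cancelling the $S$-sum against its mean, a centered sum $\sum_{j=1}^l W_j - \mathbb{E}\sum_j W_j$ of i.i.d.\ terms $W_j := \tfrac{n-k}{l}e^{y_{i_j}}(f_{i_j}-F)$, with $i_j$ uniform on $[n]\setminus S$. Two estimates feed Bernstein's inequality. Because $S$ is the exact top $k$, every tail index has $e^{y_i}\le e^{S_{\min}}$, and since $S$ contains $k$ elements each $\ge e^{S_{\min}}$ we also get $e^{S_{\min}}\le Z/k$; combined with $|f_i - F|\le 2C$ this gives the range bound $|W_j| \le \tfrac{2nCZ}{kl}=:b$. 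For the variance, $\sum_j \mathrm{Var}(W_j) \le \tfrac{n-k}{l}\sum_{i\notin S} e^{2y_i}(f_i-F)^2 \le \tfrac{n-k}{l}\,(2C)^2 (Z/k)^2 (n-k) \le \tfrac{4n^2C^2Z^2}{k^2 l}=:\sigma^2$, where I bound $e^{2y_i}\le(Z/k)^2$ on each of the $\le n$ tail terms. Plugging $\sigma^2,b$ into $\Pr(|\hat J - F\hat Z|\ge t)\le 2\exp\big(-\tfrac{t^2/2}{\sigma^2+bt/3}\big)$ with $t = \epsilon C(1-\tfrac\epsilon2)Z$: the variance term demands $k^2 l \gtrsim \epsilon^{-2} n^2\ln(4/\delta)$, which is the first displayed condition $k^2 l\ge\tfrac{8n^2}{\epsilon^2}\log(4/\delta)$, while the range term needs only $kl\gtrsim\epsilon^{-1}n\ln(4/\delta)$, already implied by the condition from (A). Hence $\Pr(B)\le\delta/2$.

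The one genuinely delicate step is the variance estimate in (B). The clean bound on $\sum_j\mathrm{Var}(W_j)$ offers a choice: one could use $\sum_{i\notin S}e^{2y_i}\le (Z/k)\,Z_{\mathrm{tail}}$, which would only need a $kl$-type condition as in the partition-function case; but to make Bernstein's variance term dominate at the target deviation $t\asymp\epsilon CZ$ one instead uses the cruder $\sum_{i\notin S}e^{2y_i}\le (Z/k)^2(n-k)$ — and it is precisely this $(n-k)$ factor that upgrades the requirement to the $k^2 l$ form. Everything else is routine: tracking the $\epsilon$- and $\delta$-splits so the constants match the statement, and checking that on $A^c\cap B^c$ the identity $\hat F - F=(\hat J - F\hat Z)/\hat Z$ with $\hat Z\ge(1-\tfrac\epsilon2)Z$ collapses to $|\hat F - F|\le\epsilon C$.
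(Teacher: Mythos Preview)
Your argument is sound in structure and reaches the correct scaling, but it uses a different decomposition from the paper and does not quite recover the stated constants.

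The paper splits via the triangle inequality as
\[
\hat F - F \;=\; \Bigl(\tfrac{\hat J}{\hat Z}-\tfrac{\hat J}{Z}\Bigr)\;+\;\Bigl(\tfrac{\hat J}{Z}-\tfrac{J}{Z}\Bigr).
\]
The first piece is controlled by Theorem~\ref{thm:partition} (applied with accuracy $\epsilon/2$ and confidence $\delta/2$) together with the deterministic bound $|\hat J|/\hat Z\le C$. The second piece is handled by \emph{Hoeffding} on the i.i.d.\ bounded variables $Q^{(j)}=\tfrac{k\,e^{y^{(j)}}}{Z}\tfrac{f^{(j)}}{C}\in[-1,1]$, which yields exactly $k^2 l \ge \tfrac{8n^2}{\epsilon^2}\ln(4/\delta)$.

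Your identity $\hat F - F = (\hat J - F\hat Z)/\hat Z$ is a clean alternative: it collapses the problem to a single mean-zero sum, and Bernstein (or just Hoeffding, since only the range bound is actually needed) on the $W_j$ works. What it costs is constants. Because you center by $F$ you carry $|f_i-F|\le 2C$ rather than $|f_i|\le C$, and your target deviation is $\epsilon C(1-\tfrac\epsilon2)Z$ rather than $\tfrac{\epsilon}{2}CZ$; the resulting Hoeffding/Bernstein variance step gives $k^2 l \ge \tfrac{8n^2}{\epsilon^2(1-\epsilon/2)^2}\ln(4/\delta)$, not the displayed constant. Likewise, the claim that the Bernstein range term is ``already implied'' by the second condition is not universal: at $\epsilon$ near $1$ one has $(2-2\epsilon)\ln(2/\delta)<\epsilon\ln 2$, so the implication fails unless $\epsilon$ is bounded away from $1$ or $\delta$ is small. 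These are cosmetic discrepancies rather than a gap; your proof is correct up to absolute constants, and the paper's decomposition is marginally tidier only because it avoids the $2C$ and $(1-\epsilon/2)$ factors.
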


The proof is in the appendix. If we set $k = l$ then 
\begin{equation}
k = O( n^{2/3} (1/\epsilon) \sqrt{\log(1/\delta)})
\end{equation}
Then, with a sublinear MIPS technique, the total runtime is sublinear. Note that we can use this to compute the expectation of $\phi(x)$ and thus the gradient of data likelihood. This technique will be used in the experiments section for the learning experiment.

\subsection{APPROXIMATE TOP ELEMENTS}

Many Maximum Inner Product Search (MIPS) methods, including LSH-based techniques, do not solve the exact nearest neighbor problem, but approximate nearest neighbor problem. In this work, we define a similar concept of the approximate top $O(\sqrt{n})$ elements that will suffice for our theoretical arguments. Further, we show that we can use LSH instances to retrieve the approximate top $k$ elements in sublinear time. 

We say that an algorithm returns the approximate top $k$ if the gap between the smallest element in $S$ and the largest element \textit{not} in $S$ is bounded by a constant.
\begin{definition}[Approximate Top $k$]
A set of elements $S$ is an approximate top $k$ if $|S|=k$ and
\begin{equation}
\max_{i \not\in S} y_i - \min_{i \in S} y_i < c 
\end{equation}
\end{definition}
We can create a sequence of LSH instances that are ``tuned'' to a range of similarity values. Then at query time, we can go through the LSH instances in decreasing order of tuned value, gathering elements until we have $k$ elements. It turns out that these elements will be the approximate top $k$ elements (more details in the appendix). This technique will have a total runtime of 
\begin{equation}
O( k + (\log(k) + \log(1/\delta)) \log(n) n^\rho)
\end{equation}
where $\rho < 1$. Thus, we have a sublinear approximate top $k$ element MIPS technique. We state this as a theorem and prove it in the appendix.

\begin{theorem}
\label{thm:lsh_top}
For sublinear $k$, there exists a MIPS technique that returns the approximate top $k$ elements in sublinear amortized time.
\end{theorem}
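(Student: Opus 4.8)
The plan is to make precise the informal recipe sketched just above the theorem: build a ladder of LSH instances, one for each level of a constant-spaced grid of similarity thresholds, and at query time walk down the ladder accumulating retrieved points until $k$ of them have been collected, returning the $k$ with largest inner product. I would organize the argument as (i) a reduction to cosine-similarity LSH, (ii) specification of the ladder and the query procedure, and (iii) the approximation and runtime analysis. For (i), apply the asymmetric transformation of \citet{neyshabur2014symmetric}: append a coordinate to each $\phi(x_i)$ so that every augmented database vector has norm $U := \max_i \|\phi(x_i)\|$, and pad the query $\theta$ with a zero. Inner products $y_i = \theta\cdot\phi(x_i)$ are unchanged, while the cosine similarity between the augmented query and the augmented $i$-th vector is exactly $y_i/(\|\theta\|U)$. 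Hence for a fixed query the ranking by cosine coincides with the ranking by $y_i$, and a cosine gap of $\gamma$ is a $y$-gap of $\gamma\|\theta\|U$; under the standard (for amortized statements) assumption that $\|\theta\|\le R$ along the query sequence, a constant cosine gap yields the constant $c$ required by the definition of approximate top $k$. It therefore suffices to produce, for cosine similarity, a size-$k$ set $S$ with $\max_{i\notin S}\cos_i - \min_{i\in S}\cos_i < O(\gamma)$, and for cosine search I would use the Signed Random Projection family of \citet{charikar2002similarity}, whose collision probability is monotone in cosine, so Theorem~2.1 applies with some $\rho<1$.

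For (ii), fix a constant $\gamma$ and a decreasing grid $\tau_0 > \tau_1 > \cdots > \tau_L$ of cosine thresholds with $\tau_{j-1}-\tau_j = \gamma$ covering the attainable range, so $L = O(1/\gamma) = O(1)$. For each $j$ build an LSH structure $\mathcal{D}_j$ tuned to the pair $(\tau_{j-1},\tau_j)$, amplified by $O(\log(k/\delta))$ independent copies of the basic table scheme so that, for a given query, with probability $1-\delta$: $\mathcal{D}_j$ returns every point with $\cos_i \ge \tau_{j-1}$, and the number of returned points with $\cos_i < \tau_j$ is $O(n^\rho\log(k/\delta))$. At query time probe $\mathcal{D}_1,\mathcal{D}_2,\ldots$ in turn; for each returned point compute $\cos_i = y_i/(\|\theta\|U)$, and when processing $\mathcal{D}_j$ admit the point into the running deduplicated set $R$ only if $\cos_i \ge \tau_j$ (this filtering step is what keeps spurious low-similarity collisions from polluting the output). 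Stop at the first $j^\ast$ for which $|R|\ge k$ and return the $k$ elements of $R$ with the largest $y_i$. Preprocessing builds the $O(1)$ instances in space and time $O(n^{1+\rho})$, amortized over the query sequence.

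For (iii), condition on the high-probability success event. Because we advanced past $\mathcal{D}_{j^\ast-1}$ we processed it fully, and since $\mathcal{D}_{j^\ast-1}$ returns all points with $\cos_i \ge \tau_{j^\ast-2}$ (all of which pass its filter), the set $R^-$ held just before $\mathcal{D}_{j^\ast}$ contains every point with $\cos_i \ge \tau_{j^\ast-2}$; in particular there are fewer than $k$ of them. Every point ever admitted to $R$ has $\cos_i \ge \tau_{j^\ast}$ (the most recent filter is the loosest), so the returned $S$ consists of $k$ points with $\min_{i\in S}\cos_i \ge \tau_{j^\ast}$. Any index outside $S$ either lost the top-$k$ comparison inside $R$ (its $\cos$ is at most $\min_{i\in S}\cos_i$), or never entered $R$; in the latter case $\cos_i < \tau_{j^\ast-2} = \tau_{j^\ast} + 2\gamma$, since otherwise it would already be in $R^-$. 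Hence $\max_{i\notin S}\cos_i - \min_{i\in S}\cos_i < 2\gamma$, giving $c = 2\gamma R U$. For the runtime, each of the $O(1)$ instances contributes $O(n^\rho\log n\log(k/\delta))$ for hashing and for the bounded number of spurious collisions, and only $O(k)$ genuine (filter-passing) points are ever processed before stopping, yielding $O(k + (\log k + \log(1/\delta))\log n\cdot n^\rho)$, which is sublinear whenever $k$ is.

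The main obstacle is the accounting in step (iii) around the fact that an LSH instance is only a one-sided, approximate oracle — it guarantees retrieval above $\tau_{j-1}$ and only bounds junk below $\tau_j$, saying nothing in the band $[\tau_j,\tau_{j-1})$ — combined with the fact that we stop accumulating mid-instance the moment $|R|$ reaches $k$. One must argue carefully, as above, that the points we might consequently miss are confined to a band of cosine-width $O(\gamma)$ (because everything above $\tau_{j^\ast-2}$ was already captured before entering $\mathcal{D}_{j^\ast}$), and, on the runtime side, that duplicate retrievals across the $O(\log(k/\delta))$ amplification copies and the per-instance spurious collisions do not inflate the query cost beyond the stated bound (up to the logarithmic factors already present). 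Handling the boundary thresholds $\tau_0,\tau_L$, discharging the reliability event by the union bound over the $<k$ points that must be reliably retrieved, and tracking the dependence on $\|\theta\|$ — which is precisely where ``amortized over the query sequence'' is used — are routine by comparison.
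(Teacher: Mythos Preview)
Your proposal is correct and follows essentially the same approach as the paper: both build a constant number of LSH instances tuned to thresholds on a constantly-spaced grid, scan from high to low until $k$ points are accumulated, and show the resulting gap is at most twice the grid spacing with the stated $O(k + (\log k + \log(1/\delta))\log n\cdot n^{\rho})$ query cost. Your treatment is in fact somewhat more careful than the paper's --- the explicit reduction to cosine via \citet{neyshabur2014symmetric}, the per-level filtering to cull spurious collisions, and the remark that boundedness of $\|\theta\|$ is where ``amortized'' enters --- but the skeleton of the argument is the same.
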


 Note that if we have a MIPS technique that returns an approximate top $k$ set $S$ then we can adapt Algorithm \ref{alg:sampling} to make $B = M - S_{\mathrm{min}} - c$ for an added increase of $e^{c}$ in the expected value of $m$, and thus the runtime.

If we have a MIPS technique that returns an approximate top $k$ set $S$ with constant $c$, then Algorithm \ref{alg:robust_sampling} and \ref{alg:partition} will have an extra factor of $e^{c/2}$ for $k$ and $l$ and Algorithm \ref{alg:expectation} will have an extra factor of $e^{2c/3}$ for $k$ and $l$. These extensions are proved in the appendix and the previously stated theorems are special cases with $c=0$.

\section{EXPERIMENTS}
\label{sec:experiments}

In this section, we present an empirical evaluation of our proposed sampling, inference, and expectation techniques. The use case for our method is when there are fixed feature vectors $\{\phi(x)\}_{x\in\mathcal{X}}$, and a sequence of inference or sampling queries with different parameter vectors $\{\theta_i\}$. Although we cannot achieve gains on a single query, through preprocessing we can decrease the amortized query time. We will evaluate \textbf{runtime improvements} and \textbf{accuracy}.

\subsection{PRELIMINARIES}

\subsubsection{MIPS technique} 

We present the MIPS technique used to retrieve the top-$k$ values of the unnormalized log-probabilities. We follow the approximate nearest neighbor search method presented in \cite{douze2016polysemous} as well as the publicly available implementation. However, we will not be making use of the compression component, as we do not optimize for memory usage.

This method relies on the use of a $k$-means clustering. With the same notations as \ref{subsec:mips}, given a query $q$ and a set of vectors $V$, we aim at finding the $k$ highest values of $\{ q\cdot v, v \in V\}$. 

We first cluster the vectors in $V$ in $n_c$ clusters. For an incoming query vector $q$, we look at the inner product with the vectors in the cluster $q$ is assigned to as well as $n_p$ neighboring clusters. While this method doesn't have any theoretical guarantees, it has been shown to perform better than LSH in practice as it more advantageously exploits the distribution of the set of vectors.

In our experiments, we use CPU implementations of all algorithms for fair comparison.

\subsubsection{Data}
We experiments with two datasets from different domains to demonstrate the effectiveness of our method in real-world use.
\paragraph{Word Embeddings} We use a set of word embeddings released by Facebook \citep{bojanowski2016enriching}. Each embedding is a dense vector representing a word in a given vocabulary. These continuous representations are obtained by training log-bilinear models on large text corpora. The embeddings incorporate structure from character $n$-grams of the words. We retain words containing only letters and scale each vector to be of unit-norm. The data is composed of $N=2,000,126$ vectors of dimension $d=300$.

\paragraph{ImageNet} The ImageNet dataset \citep{russakovsky2015imagenet} from the ILSVRC 2012 competition contains 
$1.2$ million natural images divided into $1000$ classes. We extract features using a pre-trained residual network~\citep{he2016deep} trained on this classification task. More precisely, we represent each image by its activation map from the last layer before the linear classification layer of a ResNet-152. The extracted features are of size $7\times 7 \times 2048$ for each image. We then take the average along the depth dimension and reduce dimensionality using a PCA. We scale each vector to be of unit-norm. The data is thus composed of $N=1,281,167$ vectors of dimension $d=256$. In the rest of our experiments, we choose the temperature of the log-linear model to be $\tau=0.05$.

\subsection{SAMPLING}

In this section, we measure the performance of our method in terms of both sampling quality and speed. We first present empirical results on sampling and then illustrate the efficiency of our method on a specific task: a random walk over ImageNet.

\subsubsection{Sampling}

\paragraph{Speed} We want to evaluate the runtime of our method for sampling on large datasets. Given a dataset $\mathcal{X}$ and a parameter vector $\theta$, we compare the time necessary to sample from $\Pr(x) \varpropto e^{\theta \cdot \phi(x)}$ using our method or by enumeration (brute force). We compute the sampling time for random vectors $\{\theta_i\}_{i\leq 1000}$ and subsets of varying size for ImageNet ranging from $10,000$ to $1,280,000$. The results are presented in Figure~\ref{fig:sampling-speed}.

\begin{figure}
\centering
\includegraphics[width=0.5\textwidth]{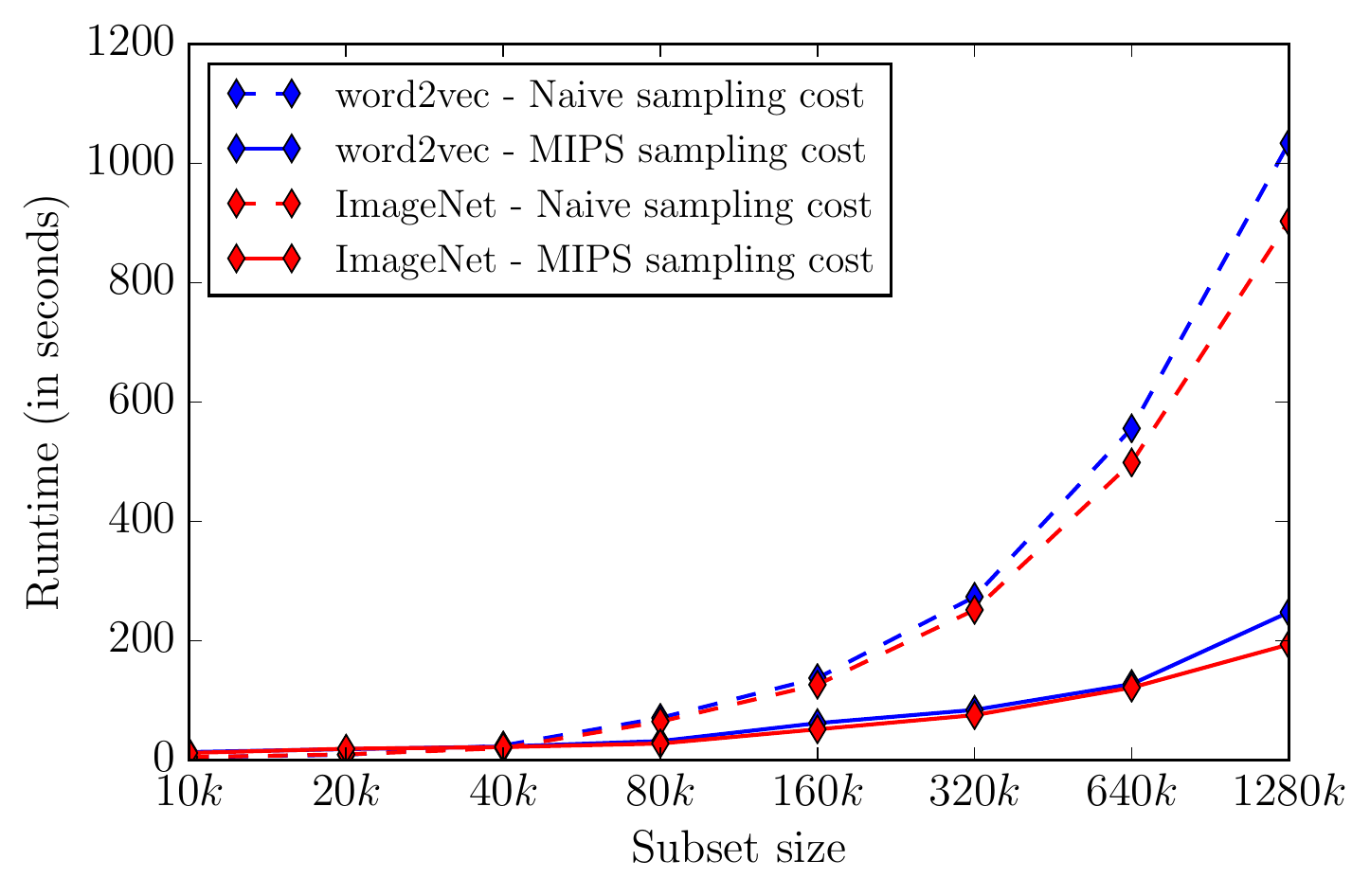}
\caption{Empirical comparison of the runtime of sampling for $10,000$ randomly chosen $\theta$ from a log-linear model on subsets (of varying size) of the datasets. Note the log-scale of the dataset size. This time is the per query runtime and does not include preprocessing.}
\label{fig:sampling-speed}
\end{figure}

We can see that the speedup is linear w.r.t the log of the sub-sampled dataset size, achieving up to $5\times$ sampling speedup for the full dataset of size $1,281,167$.
If we consider the amortized cost, i.e. including the pre-processing cost of our MIPS data structure, our method starts paying off after approximately $8,600$ samples. The amortized costs are presented in the appendix, in Figure~\ref{fig:amortized}.

\paragraph{Accuracy} To measure the accuracy of our method, we present a way to establish an upper bound on the total variation distance in closed form for a given $\theta$. Then, we average this upper bound over $100$ samples of $\theta$ (drawn uniformly from the dataset). 

Note that the lazy sampling strategy is exact unless the true maximum is not in $S \cup T$. Thus, if we can upper bound this probability, it is an upper bound on the total variation distance. For a given threshold $x$, we can compute the closed form probability that $\max_{i \not\in S \cup T} y_i + G_i < x$ and $\max_{i \in S} y_i + G_i > x$. This is the upper bound that we desire and we can optimize $x$ for the tightest upper bound. For both datasets, over 100 samples of $\theta$, the average upper bound was on the order of $10^{-4}$ proving that our sampling method is accurate even while using an approximate MIPS technique. A summary of our results in terms of accuracy and speedup are provided in Table~\ref{tab:speed-accuracy}. We provide further empirical evidence in the appendix to show that the distributions closely match on the shown $\theta$.

\begin{table}
\centering
\begin{tabular}{l c c}
\toprule
Dataset & Speedup & Total Variation Bound \\
\midrule
ImageNet & $4.65\times$ & $(2.5 \pm 1.4)\times 10^{-4}$\\
Word Embeddings & $4.17\times$ & $(4.8 \pm 2.2) \times 10^{-4}$ \\
\bottomrule
\end{tabular}
\caption{Summary of the sampling speedup and bound on the total variation distance for our method on the ImageNet and Word Embeddings datasets.}
\label{tab:speed-accuracy}
\end{table}

\subsubsection{Random walk over a large set}

To showcase the applicability of our method, we perform a random walk over the ImageNet dataset. We define the transition function, i.e. the probability to walk from image $j$ to image $i$ as $\Pr(X_{t+1} = i | X_t = j) \propto e^{\tau \phi(x_i) \cdot \phi(x_j)}$ where $\tau$ is the temperature, $\phi$ is the fixed featurization previously defined, and $x_i, x_j$ are the pixel-values of images $i,j$.
The initial state is sampled uniformly across the dataset. This is similar in spirit to the PageRank algorithm \citep{page1999pagerank}.
This setting fits our method because while the MIPS structure can be reused across time steps, no computation can be cached in the naive setting (assuming we do not store the distribution for each element, which would be on the order of Terabytes).

\begin{figure}
\centering
\includegraphics[width=0.5\textwidth]{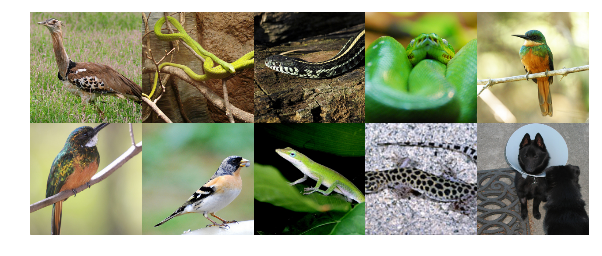}
\caption{Samples of the Markov chain. The samples are spaced out by $20$ time steps.}
\label{fig:samples-mc}
\end{figure}

We evaluate the quality of the Markov Chain by comparing the top elements of the empirical sampling distribution. We run two different Markov chains, one with exact sampling and one with our sampling technique. Over one million steps, the two Markov Chains share $73.6\%$ of the top $1000$ elements. This percentage looks low because of the finite sampling error. When we compare two different one million element windows within each chain, the top 1000 elements are shared $69.3\%$ and $72.9\%$ for the exact sampling and our sampling, respectively. It is seen that the between-chain differences are the same as the within-chain differences, so the Markov chain with our sampling technique yields roughly the same distribution as the chain with exact sampling.
\subsection{PARTITION FUNCTION ESTIMATE}
We show the performance of our partition function estimate as shown in Algorithm \ref{alg:partition}. We can trade-off error and runtime by varying $k$ and $l$. See Figure \ref{fig:partition-speed}. We average the results over several values of $\theta$, drawn uniformly from the dataset. For comparison, we plot the trade-off for only looking at the top $k$ values and using this as a partition function estimate. Additionally, we compare to the method of \cite{mussmann2016learning} for different size of noise $t$. For each value of $k, l$ and $t$ we report the runtime and relative error of the partition function estimate. As shown by the relative error of the top-$k$ estimate, sampling from the tail is necessary to achieve low relative error. We also show that the method from \cite{mussmann2016learning} cannot come close in terms of relative error, achieving a maximum of $15\%$ relative error for $t=64$. It is also important to note that their method cannot trade-off speed for accuracy as, when the noise-length $t$ increases, the injected noise destroys the MIPS structure rendering it highly inaccurate.

\begin{figure}
\centering
\includegraphics[width=\columnwidth]{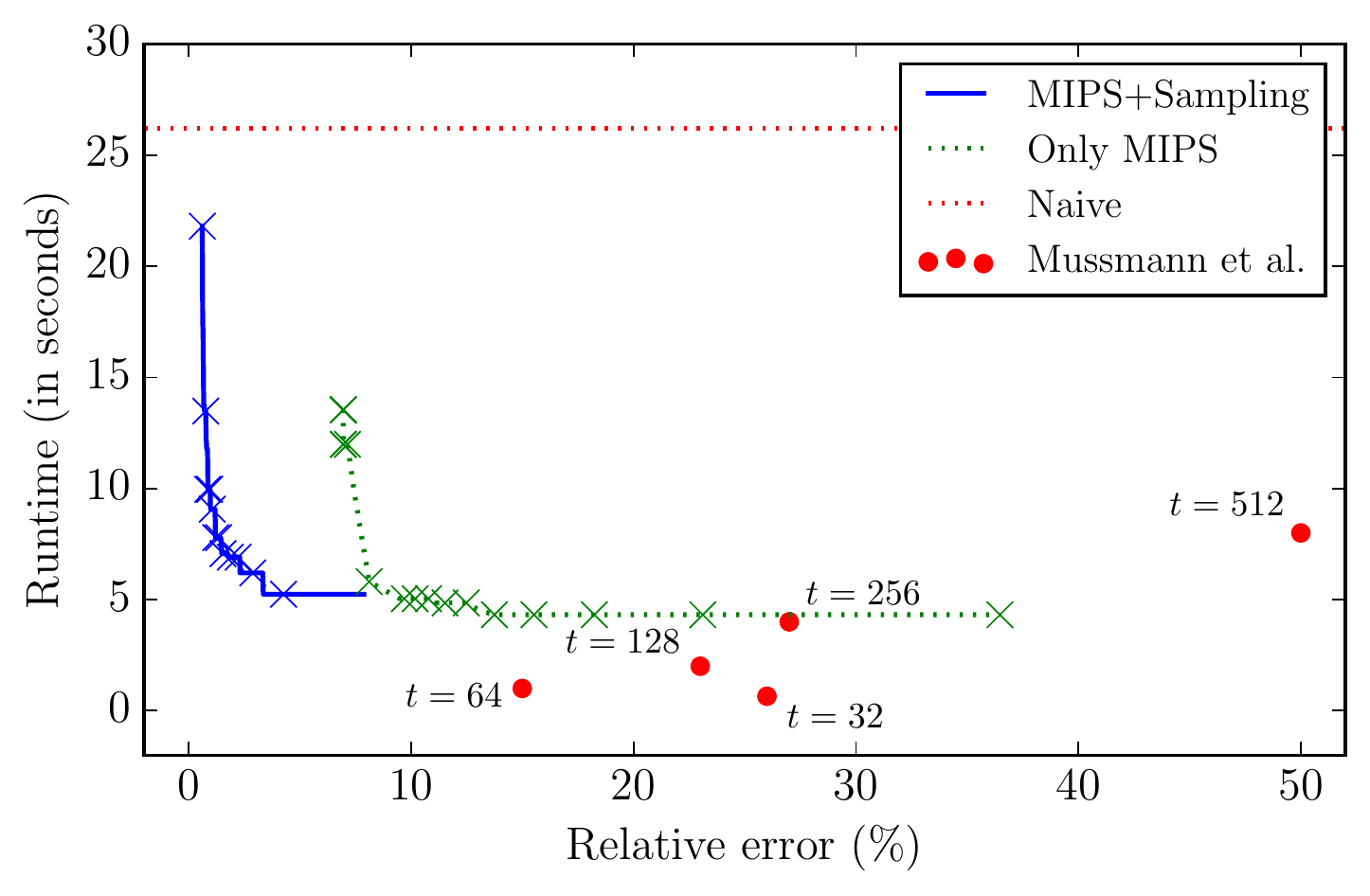}
\caption{Runtime plotted as a function of relative error of partition function estimate (different points made by varying $k$ and $l$) on ImageNet (averaged over random values of $\theta$. The red dotted line is the time for the exact partition function computation.}
\label{fig:partition-speed}
\end{figure}

\subsection{LEARNING}
We wish to maximize the likelihood of a subset of the data $\mathcal{D} \subseteq \mathcal{X}$ given $\Pr(\cdot; \theta)$. We aim at finding 
\begin{equation} \theta^* = \arg \max_\theta \sum_{x \in \mathcal{D}} \log \Pr(x; \theta)
\end{equation}
using gradient ascent. Evaluating the gradient requires finding the expectation of the features $\phi(x)$ which can be estimated using our method in Algorithm \ref{alg:expectation}.
The features are fixed but $\theta$ is updated at each step of the gradient ascent algorithm, fitting well into the setting of our method. We choose a small subset $\mathcal{D}$ of ImageNet as images with a commonality. In particular, we handpick 16 images showing the presence of water. We compare computing the gradient with our method to the computation of the exact gradient and to approximating the gradient by considering the truncated distribution on the top $k$ elements (referred to as top-$k$ gradient). The chosen images are shown in the appendix in Figure \ref{fig:training_D}. We perform gradient ascent for $5000$ iterations with learning rate $\alpha = 10$, which we halve every $1000$ iterations. The results are reported in Table~\ref{tab:gd}. The learning curves are shown in Figure~\ref{fig:learning-curve}. We also show the $10$ most probable samples (outside of the dataset $\mathcal{D}$) according to the log-linear model in Figure~\ref{fig:samples}. We can see that these images are semantically similar to the training set, all containing water, showcasing the expressive power of the ResNet features.

\begin{table}
\centering
\begin{tabular}{l c c}
\toprule
Method & Log-likelihood & Speedup \\
\midrule
Exact gradient & $-3.170$ & $1\times$ \\
Only top-$k$ & $-4.062$ & $22.7\times$ \\
Our method & $-3.175$ & $9.6\times$ \\
\bottomrule
\end{tabular}
\caption{Log-likelihood and speedup for the learning of a log-linear model on ImageNet. For our method, we picked $k=10\sqrt{n}, l = 10k$, for the comparison to only weighing the top-$k$, we chose $k=100\sqrt{n}$ as well.}
\label{tab:gd}
\end{table}

\begin{figure}[t]
\centering
\includegraphics[width=0.5\textwidth]{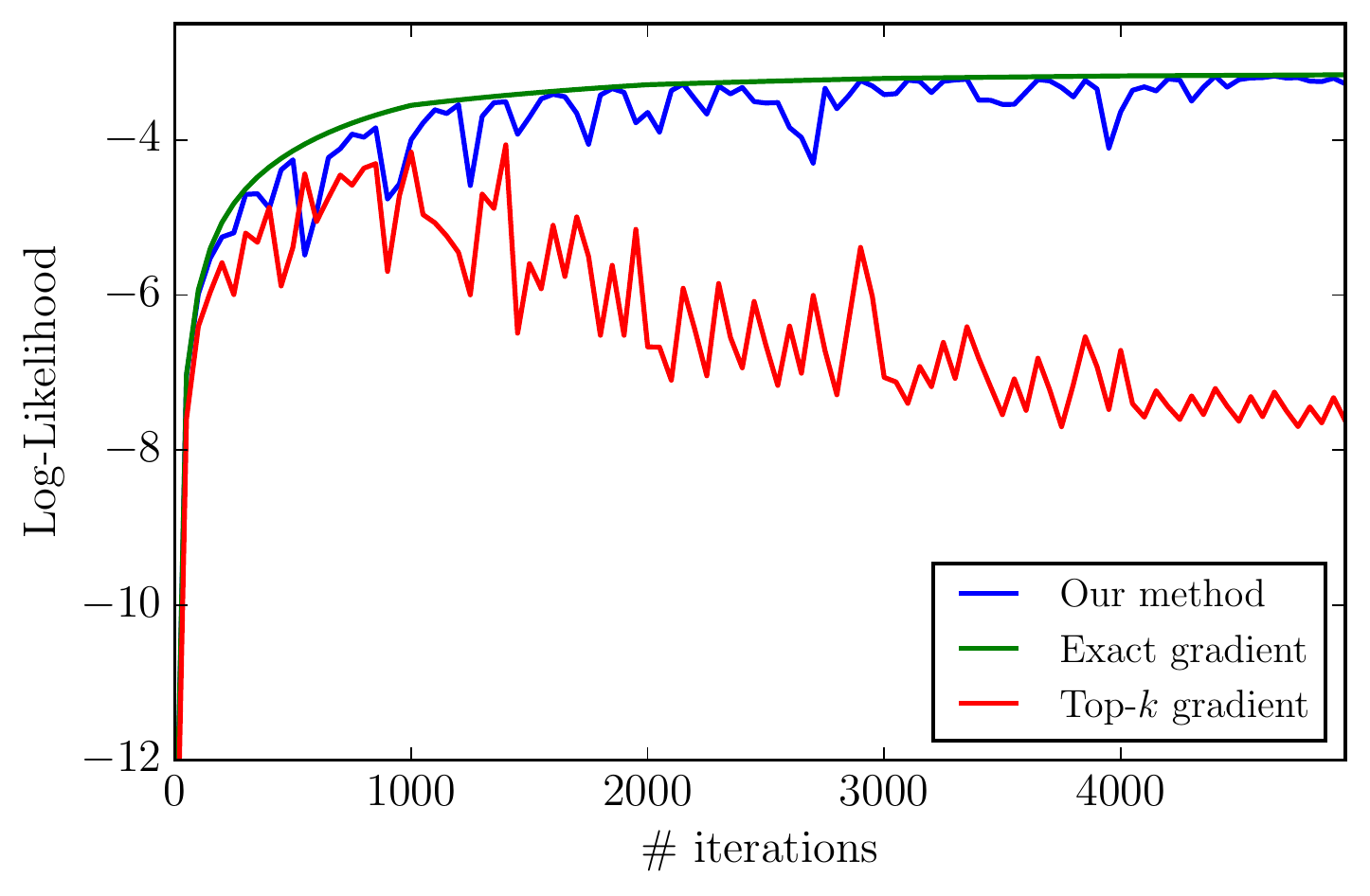}
\caption{Log-likelihood plotted against the number of iterations for performing gradient ascent on our learning problem for $5000$ iterations with a learning rate $\alpha=10$, halving the learning rate every $1000$ iterations.}
\label{fig:learning-curve}
\end{figure}

\begin{figure}[t]
\centering
\includegraphics[width=0.5\textwidth]{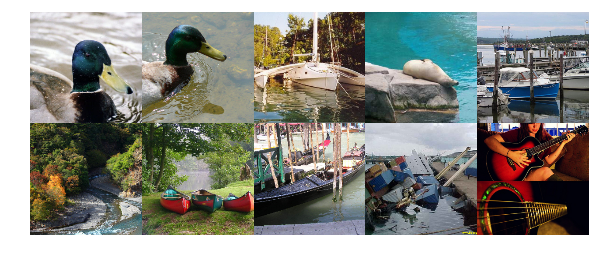}
\caption{$10$ most probable images (outside of $\mathcal{D}$) from our log-linear model trained to convergence.}
\label{fig:samples}
\end{figure}
As shown in Figure~\ref{fig:learning-curve}, we can see that the log-likelihood for our method and the exact gradient almost exactly overlap indicating that our estimation of the gradient is very accurate. In contrast, the top-$k$ gradient, while faster, proves to be a poor estimator and thus cannot optimize the log-likelihood. To summarize, \textbf{our method converges to the global maximum $\mathbf{9.6\times}$ as fast as computing the exact gradient}.

\section{RELATED WORK}
\label{sec:related}

Our method can be viewed in two different comparative perspectives. Our method can be seen as an alternative to only using the top-$k$ most probable elements as is done in \citet{vijayanarasimhan2014deep}. There, large output spaces for deep learning are handled using Locality Sensitive Hashing. In particular, the top vectors are gathered and the rest of the vectors in the tail are ignored. For spread-out distributions (closer to uniform), this method will fail. Our work provides a scalable method to incorporate the probability mass present in the tail of the distribution by sampling $O(\sqrt{n})$ elements, a small prices compared to retrieving the top elements.

Our method can also be compared to a different way of combining the Gumbel max trick and Maximum Inner Product Search as presented in \citet{mussmann2016learning}. In that work, Gumbel noise is appended to the database vectors and stored in the MIPS data structure. Then, query vectors are chosen to access the frozen Gumbel noise. That work has several major shortcomings that make it unusable in practice. 

The Gumbel noise is re-used, introducing correlated samples and systematic bias in the partition function estimate. In particular, for any fixed value of the parameters, there are a fixed number of samples ``frozen'' into the stored Gumbel noise. We avoid this issue by sampling $O(\sqrt{n})$ fresh Gumbel variables for every sample. While real world data often has structure that can be exploited by the MIPS techniques, in \citet{mussmann2016learning}, the structure is destroyed by injecting random Gumbel noise. In our technique, we preserve structure in the database vectors by leaving the vectors unchanged. Finally, the method of \citet{mussmann2016learning} requires accessing the MIPS data structure many times for independent samples and partition function estimates. In this work, we only require accessing the MIPS data structure once per parameter value.

\section{CONCLUSION}
\label{sec:conclusion}

In conclusion, we have presented several related methods that are based on the key idea of accessing the large elements in a distribution using Maximum Inner Product Search and accessing the tail of a distribution with uniform sampling. This decreases the runtime from $O(n)$ to $O(\sqrt{n})$ plus the runtime for the MIPS technique. 

This work is best suited for cases where the output space is large but enumerable, such as those in NLP and computer vision. This work can be expected to give speedups when the feature vectors of a log-linear model are fixed but it is desired to perform inference and sampling for several different values of the the parameters. Note that our method is as flexible as the MIPS method that is employed; the feature vectors need to only be fixed for the MIPS to work. As an example, if a MIPS system allows for sparse updates, our method will also allow for sparse updates. Since our method treats MIPS as a black-box, advances in the speed and accuracy of MIPS techniques automatically improve our method. 

When accessing the top elements is not accurate enough, we present a method to include uniform samples from the tail to provide provably good samples and estimates of the partition function. All this, at the small overhead price of uniform sampling.

\section{Acknowledgments}
\label{sec:acknowledgments}

This research was supported by Intel Corporation, Future of Life Institute ($\#2016-158687$) and NSF grants $1651565$, $1649208$, $1522054$, and DGE-$1656518$.

We thank Ludwig Schmidt and Moses Charikar for helpful discussions.
\subsubsection*{References}

\bibliography{bibliography}

\clearpage
\section{APPENDIX}

\subsection{EMPIRICAL EVALUATION OF SAMPLING}
We wish to evaluate the empirical accuracy of our sampling technique on concrete examples. We do this in two ways. First, we can sort the elements by probability and make events of drawing an element in the top $10$, or the top $100$, top $1000$, etc. We show the results for two random $\theta$ with different distributions in Figure \ref{fig:accuracy} for $50,000$ samples. Note that our method closely matches the histogram of the true distribution. For a more comprehensive evaluation, we sample 30 values of $\theta$ and compute the relative error for exact sampling and our approximate sampling. See Figure \ref{fig:accuracy}. We also present in Figure~\ref{fig:amortized} the amortized speedups obtained by our method. The amortized cost is defined as the time needed to train the index, added to the runtime of sampling $10,000$ samples. 

\begin{figure*}
\centering
\includegraphics[width=\textwidth]{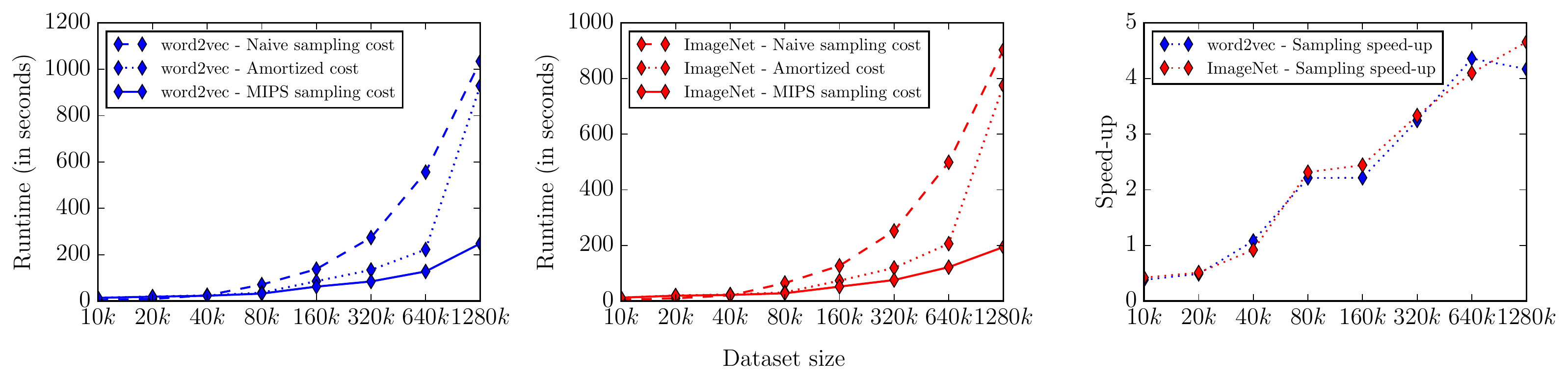}
\caption{\textit{Left and Center:} empirical comparison of the runtime of sampling for a $10,000$ randomly chosen $\theta$ on Word Embeddings and Image Net for varying fraction of the data. The amortized cost is defined as the time necessary for the sampling in addition to the training time of the index. \textit{Right:} Evaluation of the sampling speed-up for both datasets for varying fraction of the data.}
\label{fig:amortized}
\end{figure*}

\begin{figure*}
\centering
\includegraphics[width=\textwidth]{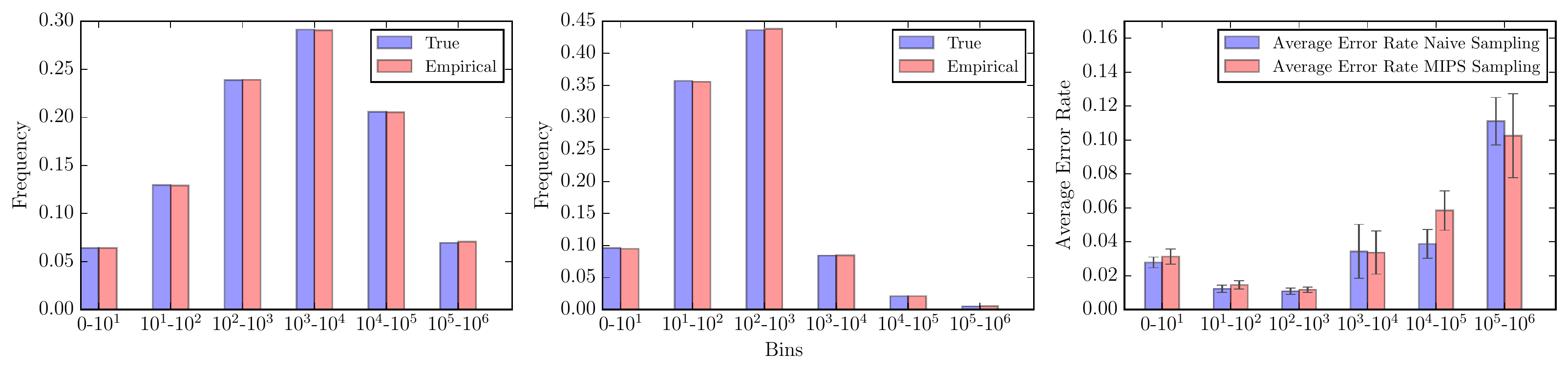}
\caption{\textit{Left and Center:} Two randomly chosen $\theta$ with different bin distributions. We see that the empirical sampling closely matches the true distribution for all the bins. \textit{Right:} Evaluation of the relative error on 30 samples of $\theta$ for the exact sampling and our sampling technique. The error bars are for the average error rate between the empirical distribution and the true distribution for both exact sampling and our method. We see that the error rates are not statistically significantly different.}
\label{fig:accuracy}
\end{figure*}

\subsection{LEARNING TRAINING SET}
For the learning experiment, we show the set of images $\mathcal{D}$ that we maximized the probability of. See Figure \ref{fig:training-images}. The common theme of the images is the presence of water.

\begin{figure}
\centering
\includegraphics[width=0.45\textwidth]{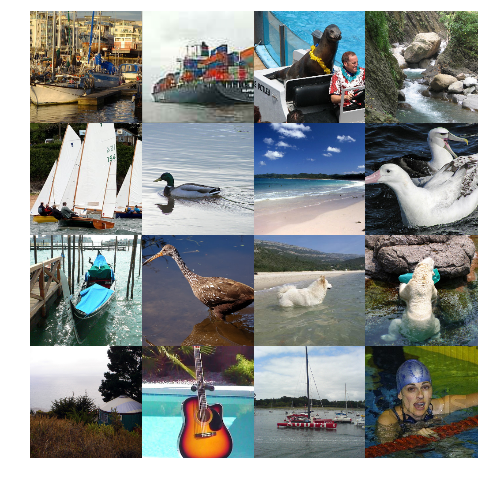}
\caption{The set of images $\mathcal{D}$ used in the learning experiment. Note that all of the images contain water, though the content of the images is quite different.}
\label{fig:training-images}
\label{fig:training_D}
\end{figure}

\subsection{VALUE OF $c$}
For all of the proofs, we will include the result with the approximate MIPS with an error of $c$. To recover the original results in the paper, set $c=0$.

\subsection{SAMPLING}

\begin{thmn}[\ref{thm:runtime}]
For Algorithm \ref{alg:sampling}, $\mathbb{E}[m] \leq \frac{n e^c}{k}$ 
\end{thmn}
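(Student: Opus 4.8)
The plan is to work conditionally on the cutoff $B$ and then average over the randomness in the Gumbels attached to $S$. Conditioned on $B$, the count $m$ is $\mathrm{Binomial}(n-k,\,1-\exp(-\exp(-B)))$, since each of the $n-k$ tail elements independently receives a Gumbel exceeding $B$ with probability $\Pr(G>B)=1-\exp(-\exp(-B))$; hence $\mathbb{E}[m\mid B]=(n-k)\bigl(1-\exp(-\exp(-B))\bigr)$. First I would apply the elementary bound $1-e^{-t}\le t$ with $t=e^{-B}$ to obtain $\mathbb{E}[m\mid B]\le (n-k)\,e^{-B}$, and therefore $\mathbb{E}[m]\le (n-k)\,\mathbb{E}[e^{-B}]$.

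Next I would unpack $B$. In the approximate-MIPS version $B=M-S_{\mathrm{min}}-c$, so $e^{-B}=e^{c}\,e^{S_{\mathrm{min}}-M}$, and the key step is to control $e^{S_{\mathrm{min}}-M}$ using only the Gumbels on $S$. For every $j\in S$ we have $M=\max_{i\in S}(y_i+G_i)\ge y_j+G_j$, and since $j\in S$ also $S_{\mathrm{min}}\le y_j$; subtracting these gives $S_{\mathrm{min}}-M\le -G_j$. Taking the tightest such inequality over $j\in S$ yields $S_{\mathrm{min}}-M\le -\max_{j\in S}G_j$, i.e. $e^{S_{\mathrm{min}}-M}\le e^{-\max_{j\in S}G_j}$.

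It then remains to compute $\mathbb{E}\bigl[e^{-\max_{j\in S}G_j}\bigr]$, where the $G_j$ for $j\in S$ are $k$ i.i.d.\ standard Gumbels. Since $\Pr(\max_j G_j<x)=\exp(-k e^{-x})=\exp(-e^{-(x-\ln k)})$, the maximum is a standard Gumbel shifted by $\ln k$, so $\mathbb{E}[e^{-\max_j G_j}]=\tfrac1k\,\mathbb{E}[e^{-G}]$ for a standard Gumbel $G$. Evaluating the Gumbel moment generating function $\mathbb{E}[e^{tG}]=\Gamma(1-t)$ at $t=-1$ gives $\mathbb{E}[e^{-G}]=\Gamma(2)=1$, hence $\mathbb{E}[e^{-\max_j G_j}]=1/k$. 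Chaining the bounds, $\mathbb{E}[m]\le (n-k)\,e^{c}/k\le n e^{c}/k$, which is the claim; setting $c=0$ recovers the statement in the main text. There is no real obstacle here — the only points needing care are recognizing the distribution of the maximum of i.i.d.\ Gumbels and evaluating the Gumbel MGF at $t=-1$.
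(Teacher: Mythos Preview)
Your proof is correct and follows essentially the same approach as the paper: condition on $B$, bound the Binomial mean by $(n-k)e^{-B}$ via $1-e^{-t}\le t$, use the pointwise inequality $M-S_{\min}\ge\max_{j\in S}G_j$, and then exploit that the maximum of $k$ i.i.d.\ Gumbels is a Gumbel shifted by $\ln k$ together with $\mathbb{E}[e^{-G}]=1$. The only cosmetic difference is that the paper phrases the last step as a stochastic-dominance argument through tail probabilities (observing that $e^{-G}\sim\mathrm{Exp}(1)$), whereas you compute the Gumbel MGF at $t=-1$ directly; both routes yield $\mathbb{E}[e^{-\max_j G_j}]=1/k$.
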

\begin{proof}
Note that $m$ is the number of Gumbels that are larger than $B = M - S_{min} - c$. 

Note that Gumbels can be defined by $-\ln(-\ln(U_i))$ where $U_i$ is a uniform random variable on the interval $[0,1]$. Thus, we can think of each point having a uniform sample $U_i$ and finding places where

\begin{equation}
-\ln(-\ln(U_i)) > M - S_{min} - c
\end{equation}

\begin{equation}
U_i > \exp(-\exp(S_{min} + c - M))
\end{equation}

Thus, if we can find places where $U_i > \exp(-\exp(S_{min} + c - M))$, then we have the value of $m$. The number of points where this occurs is distributed according to $Bin(n - |S|, 1 - \exp(-\exp(S_{min} + c - M))$. 

Thus, 

\begin{equation}
\mathbb{E}[m|M] = (n - |S|) (1 - \exp(-\exp(S_{min} + c - M)))
\end{equation}
\begin{equation}
\mathbb{E}[m|M] \leq n \exp(S_{min} + c - M)
\end{equation}

Note that

\begin{equation}
\Pr[ne^{S_{min} + c - M} > x] = \Pr[ M - S_{min} - c < \ln(n/x)]
\end{equation}

\begin{equation}
= \Pr[(\max_{i \in S} y_i + G_i) - S_{min} - c < \ln(n/x)]
\end{equation}

\begin{equation}
\leq \Pr[\max_{i \in S} G_i - c < \ln(n/x)]
\end{equation}

\begin{equation}
\leq \Pr[\ln(|S|) + G - c < \ln(n/x)]
\end{equation}

\begin{equation}
\leq \Pr[\frac{n e^c e^{-G}}{|S|} > x]
\end{equation}

\begin{equation}
\leq \Pr[exponential(\frac{n e^c}{|S|}) > x]
\end{equation}

And thus,

\begin{equation}
\mathbb{E}[ne^{S_{min} + c - M}] \leq \mathbb{E}[exponential(\frac{n e^c}{|S|})] = \frac{n e^c}{|S|}
\end{equation}

Putting it all together,

\begin{equation}
\mathbb{E}[m] = \mathbb{E}[\mathbb{E}[m|M]] \leq \frac{n e^c}{|S|}
\end{equation}

\end{proof}

\begin{thmn}[\ref{thm:whp_correctness}]
For Algorithm \ref{alg:robust_sampling}, the sample is an exact sample with probability $1 - \delta$ for $\delta = \exp(-\frac{kl}{n} e^{-c})$. 
\end{thmn}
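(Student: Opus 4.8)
The plan is to reduce the claim to a bound on the probability that the lazy-Gumbel procedure ``misses'' the true maximizer of $y_i + G_i$, following the same logic as the proof of Theorem~\ref{thm:correctness}. First I would note that the construction in Algorithm~\ref{alg:robust_sampling} — drawing $m \sim \mathrm{Binomial}(n-k, \Pr[G > B])$, choosing a uniform $m$-subset $T \subseteq \mathcal{X}\setminus S$, assigning to the points of $T$ values conditioned on $G_i > B$, and implicitly giving every point of $\mathcal{X}\setminus(S\cup T)$ a value conditioned on $G_i \le B$ — is a faithful simulation of an i.i.d. family $\{G_i\}_{i=1}^n$ of Gumbels. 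Hence by Proposition~\ref{prop:gumbel-max}, whenever $\hat x = \argmax_{i \in S\cup T} y_i + G_i$ coincides with $\argmax_{i \in \mathcal{X}} y_i + G_i$, the output is an exact sample, and it suffices to bound the probability of the \emph{failure} event that the global argmax lies outside $S \cup T$.

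Second, I would extract a deterministic consequence of failure. Any $i \notin S\cup T$ has $G_i \le B$ by construction, and, since $S$ is an approximate top-$k$ set with constant $c$, has $y_i \le S_{\min} + c$ where $S_{\min} = \min_{i\in S} y_i$; therefore $y_i + G_i < S_{\min} + B + c$. On the other hand, with $M = \max_{i\in S}(y_i + G_i)$, failure requires some such $i$ to exceed everything in $S\cup T$, in particular to exceed $M$. So failure can occur only when $M < S_{\min} + B + c$, giving $\Pr[\text{failure}] \le \Pr[M < S_{\min} + B + c]$.

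Third, I would lower-bound $M$: since $y_i \ge S_{\min}$ for every $i \in S$, we have $M \ge S_{\min} + \max_{i\in S} G_i$, so $\Pr[M < S_{\min} + B + c] \le \Pr[\max_{i\in S} G_i < B + c]$. The maximum of $|S| = k$ i.i.d. Gumbels has CDF $\Pr[\max_{i\in S} G_i < x] = \exp(-k e^{-x})$, so this is $\exp(-k e^{-(B+c)})$. Finally I would substitute the fixed cutoff $B = -\ln(-\ln(1 - l/n))$, so that $e^{-B} = -\ln(1 - l/n) \ge l/n$, and conclude $\Pr[\text{failure}] \le \exp(-k e^{-c} e^{-B}) \le \exp(-\tfrac{kl}{n} e^{-c}) = \delta$. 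The body's statement (Theorem~\ref{thm:whp_correctness}) is the special case $c = 0$, where $kl \ge n\ln(1/\delta)$ forces $\exp(-kl/n)$ to be at most the target $\delta$.

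I do not expect a serious obstacle. The one step deserving care is the opening reduction: verifying that the Binomial-count-plus-uniform-subset-plus-conditional-values scheme really does reproduce i.i.d. Gumbels (so that Proposition~\ref{prop:gumbel-max} applies) and that the only way the truncated argmax can differ from the global argmax is for the global maximizer to escape $S\cup T$ — here the fact that $B$ is fixed in advance, rather than adaptive as in Algorithm~\ref{alg:sampling}, actually simplifies matters. Everything after that is the deterministic sandwich $y_i + G_i < S_{\min} + B + c \le M$ together with a one-line Gumbel tail computation.
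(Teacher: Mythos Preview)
Your proposal is correct and follows essentially the same route as the paper: bound the failure probability by $\Pr[\max_{i\in S} G_i < B + c]$ via the sandwich $y_i + G_i < S_{\min} + B + c$ for $i\notin S\cup T$ and $M \ge S_{\min} + \max_{i\in S} G_i$, then evaluate the Gumbel-max tail and use $-\ln(1-l/n)\ge l/n$. The only cosmetic difference is that the paper computes the tail by writing $\max_{j\le k} G_j \stackrel{d}{=} \ln k - \ln(-\ln U)$ and chasing inequalities in $U$, whereas you invoke the closed-form CDF $\exp(-k e^{-x})$ directly; your explicit remark that the Binomial--uniform-subset--conditional construction faithfully simulates i.i.d.\ Gumbels is a point the paper leaves implicit.
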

\begin{proof}
Note that the elements not in $S \cup T$ have values $y_i \leq S_{min} + c$ and $G_i \leq B = -\ln(-\ln(1 - l/n))$. Further, there exists an element in $S$ with $y_i \geq S_{min}$ and with $G_i = \max_{j=1}^k G_j$. As long as there is an element in $S$ that exceeds all the elements not in $S \cup T$, the sample will be exact.

\begin{equation}
\Pr[\text{not exact sample}] \leq
\end{equation}
\begin{equation}
\leq \Pr[\max_{j=1}^k G_j < -\ln(-\ln(1 - l/n)) + c]
\end{equation}
\begin{equation}
\leq \Pr[\ln(k) - \ln(-\ln(U)) < -\ln(-\ln(1 - l/n)) + c]
\end{equation}
\begin{equation}
\leq \Pr[k(-\ln(1 - l/n)) e^{-c} < -\ln(U)]
\end{equation}
\begin{equation}
\leq \Pr[\frac{kl}{n} e^{-c} < -\ln(U)]
\end{equation}
\begin{equation}
\leq \Pr[\exp(-\frac{kl}{n} e^{-c}) > U]
\end{equation}
\begin{equation}
\leq \exp(-\frac{kl}{n} e^{-c})
\end{equation}

Thus, the probability of failure, $\delta$, is bounded by $\exp(-\frac{kl}{n} e^{-c})$
\end{proof}

\subsection{PARTITION FUNCTION ESTIMATE}

\begin{thmn}[\ref{thm:partition}]
Algorithm \ref{alg:partition} returns an unbiased estimate $\hat{Z}$ and for $kl \geq \frac{2}{3} \frac{1}{\epsilon^2} n e^c \ln(1/\delta)$, then with $1 - \delta$ probability, 

$$\frac{|\hat{Z} - Z|}{Z} \leq \epsilon$$

\end{thmn}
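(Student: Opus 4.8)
The plan is to establish unbiasedness directly by linearity of expectation, and then obtain the relative-error concentration via Bernstein's inequality applied to the random part of $\hat Z$.

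Unbiasedness is immediate: the head term $\sum_{i\in S}e^{y_i}$ is deterministic, and since each of the $l$ elements of $T$ is drawn uniformly from $[n]\setminus S$, $\mathbb{E}\!\left[\frac{n-|S|}{l}\sum_{i\in T}e^{y_i}\right]=\frac{n-|S|}{l}\cdot l\cdot\frac{1}{n-|S|}\sum_{i\notin S}e^{y_i}=\sum_{i\notin S}e^{y_i}$, so $\mathbb{E}[\hat Z]=Z$.

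For the concentration bound, write $\hat Z-Z=\sum_{j=1}^l V_j-W$ where $V_j=\frac{n-|S|}{l}e^{y_{I_j}}$ are i.i.d., $I_j$ uniform on $[n]\setminus S$, and $W=\sum_{i\notin S}e^{y_i}$. The key structural fact is that every tail index $i\notin S$ satisfies $y_i<S_{\min}+c$ by the approximate-top-$k$ property, while $Z\ge\sum_{i\in S}e^{y_i}\ge k\,e^{S_{\min}}$; hence $e^{y_i}<e^c Z/k$ on the tail. This simultaneously bounds the range $V_j<\frac{(n-|S|)e^cZ}{lk}\le\frac{ne^cZ}{lk}$ and, via $\sum_{i\notin S}e^{2y_i}\le\frac{e^cZ}{k}\sum_{i\notin S}e^{y_i}$, the second moment, giving $\operatorname{Var}(\hat Z)=l\operatorname{Var}(V_1)\le l\,\mathbb{E}[V_1^2]\le\frac{ne^cZW}{lk}\le\frac{ne^cZ^2}{lk}$. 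Feeding $t=\epsilon Z$, this variance bound, and this range bound into Bernstein's inequality makes the exponent of order $\frac{\epsilon^2 lk}{ne^c}$, so demanding it exceed $\ln(1/\delta)$ (up to the two-sided union-bound factor) gives a condition of the shape $kl\gtrsim\frac{1}{\epsilon^2}ne^c\ln(1/\delta)$.

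I expect the constant to be the only real obstacle. Bernstein's bound has a variance term and a range term, and which dominates depends on how the mass splits between the tail $W$ and head $Z_S=\sum_{i\in S}e^{y_i}$; the head-heavy and tail-heavy extremes are easy, and the worst case is the balanced one. To land exactly on the stated $\tfrac{2}{3}$ one should keep the sharper forms $\operatorname{Var}(\hat Z)\le\frac{ne^cZ_SW}{lk}$ and range $\le\frac{ne^cZ_S}{lk}$, then use $Z_S\big(W+\tfrac{\epsilon}{3}Z\big)\le\tfrac{(1+\epsilon/3)^2}{4}Z^2$ (AM--GM on the split $Z=Z_S+W$, for $\epsilon\le1$). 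One must also carry the $e^c$ factor from the approximate-top-$k$ guarantee throughout (the $c=0$ case recovers the main-text statement) and check which of the two tails is binding; finally, setting $k=l$ yields the advertised $O\!\big(\tfrac{1}{\epsilon}\sqrt{n}\sqrt{\ln(1/\delta)}\big)$ runtime.
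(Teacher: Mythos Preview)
Your proposal is correct and shares the paper's core structural insight: every tail weight satisfies $e^{y_i}\le e^{c}Z_S/k$ (the paper phrases this via $r=\max_{i\notin S}e^{y_i}$ and $Z_S\ge kr e^{-c}$, which is the same thing). The unbiasedness argument is identical.

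Where you diverge is in the concentration tool. The paper rescales the tail samples to $q_i=e^{y_i}/r\in[0,1]$ and applies the multiplicative Chernoff bound in the form $\Pr[|\bar Q-\mathbb{E}[Q]|>a]\le 2\exp\!\big(-\tfrac{1}{3}a^2 l/\mathbb{E}[Q]\big)$, then sets $a=\epsilon\big(\mathbb{E}[Q]+\tfrac{ke^{-c}}{n}\big)$ and uses the elementary inequality $(x+y)^2/x\ge 2y$ to extract the $\tfrac{kl e^{-c}}{n}$ factor in the exponent. You instead bound range and variance explicitly and invoke Bernstein, with an AM--GM step on the $Z_S$/$W$ split to control the denominator. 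Both routes are short and valid; the paper's is slightly slicker because the rescaling to $[0,1]$ absorbs the range bound into the Chernoff form and avoids juggling two terms in Bernstein's denominator, while yours is more transparent about where the variance and range contributions come from and makes the worst-case head/tail split explicit. Your caution about the constant is well placed: neither argument lands exactly on the stated $\tfrac{2}{3}$ without some slack (the paper's own arithmetic is loose there), so treat that constant as nominal rather than sharp.
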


\begin{proof}
Define $Z = \sum_i e^{y_i}$. Let $S$ be the indices of the $k$ largest elements of $\{y_i\}$ and $S' = [1,n] \setminus S$. Denote $||S'||_1 = \sum_{i \in S'} e^{y_i}$ and $||S||_1 = \sum_{i \in S} e^{y_i}$. Thus, the true partition function is $Z = \sum_{i \in S} e^{y_i} + \sum_{i \in S'} e^{y_i} = ||S||_1 + ||S'||_1$.

Let $r$ be the largest value of $e^{y_i}$ for elements in $S'$. Then for all $i \in S'$, $e^{y_i} \in (0, r]$ and further, for all $i \in S$, $e^{y_i} \geq r e^{-c}$. We can scale these values of $S'$ and denote them as $q_i = \frac{e^{y_i}}{r}$ where $q_i \in [0,1]$.

For the estimate $\hat{Z}$ we will draw $l$ samples with replacement from $S'$ and denote the set as $T$. Denote the samples as $y^{(j)}$ and the scaled versions as $q^{(j)} = \frac{e^{y^{(j)}}}{r}$. 

We use the estimate:

\begin{equation}
\hat{Z} = \sum_{i \in S} e^{y_i} + \frac{|\mathcal{X} - S|}{|T|} \sum_{i \in T} e^{y_i}
\end{equation}
\begin{equation}
\hat{Z} = ||S||_1 + \frac{(n-k)r}{l} \sum_{j = 1}^l q^{(j)}
\end{equation}

Note that

\begin{equation}
\mathbb{E}[\hat{Z}] = (n-k)r \mathbb{E}[q^{(1)}] + ||S||_1
\end{equation}
\begin{equation}
\mathbb{E}[\hat{Z}] = (n-k) r \sum_{i \in S'} \frac{1}{|S'|} \frac{e^{y_i}}{r} + ||S||_1 = ||S'||_1 + ||S||_1 = Z
\end{equation}

This is because $|S'| = n-k$. Thus, $\hat{Z}$ is an unbiased estimator of $Z$. However, we are concerned if it is well concentrated about its mean.

Let $Q$ be a random variable as the scaled sample from $S'$ and $\bar{Q}$ be the empirical mean over $l$ samples. Thus, $\mathbb{E}[Q] = \frac{||S'||_1}{(n-k) r}$.

Note that 

\begin{equation}
|\hat{Z} - Z| = |\frac{(n-k)r}{l} \sum_{j = 1}^l q^{(j)} - ||S'||_1 |
\end{equation}
\begin{equation}
|\hat{Z} - Z| = |\frac{(n-k)r}{l} \sum_{j = 1}^l q^{(j)} - (n-k) r \mathbb{E}[Q] |
\end{equation}
\begin{equation}
|\hat{Z} - Z| = (n-k) r |\frac{1}{l} \sum_{j = 1}^l q^{(j)} - \mathbb{E}[Q] |
\end{equation}
\begin{equation}
|\hat{Z} - Z| = (n-k) r |\bar{Q} - \mathbb{E}[Q] |
\end{equation}

Therefore,

\begin{equation}
\Pr[|\hat{Z} - Z| > \epsilon Z] = \Pr[ (n-k)r |\bar{Q} - \mathbb{E}[Q] | > \epsilon (||S'||_1 + ||S||_1)]
\end{equation}
\begin{equation}
 = \Pr[ (n-k)r |\bar{Q} - \mathbb{E}[Q] | > \epsilon ((n-k) r \mathbb{E}[Q] + ||S||_1)]
\end{equation}
\begin{equation}
 = \Pr[ |\bar{Q} - \mathbb{E}[Q] | > \epsilon (\mathbb{E}[Q] + \frac{||S||_1}{(n-k)r} )]
\end{equation}
\begin{equation}
 \leq \Pr[ |\bar{Q} - \mathbb{E}[Q] | > \epsilon (\mathbb{E}[Q] + \frac{k e^{-c}}{n} )]
\end{equation}

If we use Chernoff (use a convexity argument to bound the MGF in terms of the mean as on page 22 of "Concentration of Measure for the Analysis of Randomised Algorithms" by Dubhashi and Panconesi)
\begin{equation}
\Pr[|\sum_j Q^{(j)} - l\mathbb{E}[Q]| > \delta l\mathbb{E}[Q] ] \leq 2 \exp(-\frac{1}{3} \delta^2 l\mathbb{E}[Q] )
\end{equation}
\begin{equation}
\Pr[|\bar{Q} - \mathbb{E}[Q]| > \delta \mathbb{E}[Q] ] \leq 2 \exp(-\frac{1}{3} \delta^2 l\mathbb{E}[Q] )
\end{equation}
\begin{equation}
\Pr[|\bar{Q} - \mathbb{E}[Q]| > a ] \leq 2 \exp(-\frac{1}{3} a^2 l \frac{1}{\mathbb{E}[Q]} )
\end{equation}

Combining these two by setting $a = \epsilon(\mathbb{E}[Q] + \frac{k e^{-c}}{n})$,

\begin{equation}
\Pr[|\bar{Q} - \mathbb{E}[Q]| > \epsilon (\mathbb{E}[Q] + \frac{k e^{-c}}{n} )] \leq
\end{equation}
\begin{equation}
\leq 2 \exp(-\frac{1}{3} \epsilon^2 l (\mathbb{E}[Q] + \frac{k e^{-c}}{n} )^2 \frac{1}{\mathbb{E}[Q]} )
\end{equation}
\begin{equation}
 \leq 2 \exp(-\frac{2}{3} \epsilon^2 \frac{kl e^{-c}}{n}  )
\end{equation}

Thus, as long as $kl \geq \frac{2}{3} \frac{1}{\epsilon^2} n e^c \ln(1/\delta)$, then with $1- \delta$ probability, $\hat{Z} \in (1 \pm \epsilon) Z$
\end{proof}

 returns an unbiased estimate $\hat{Z}$ and for $kl = \frac{2}{3} \frac{1}{\epsilon^2} n \ln(1/\delta)$, then with $1- \delta$ probability, $\hat{Z} \in (1 \pm \epsilon) Z$

\subsection{EXPECTATION ESTIMATE}

\begin{thmn}[\ref{thm:expectation}]
Algorithm \ref{alg:expectation} returns an  estimate $\hat{F}$ such that $|\hat{F} - F| \leq \epsilon C$ with probability $\delta$ if

$$l k^2 \geq \frac{8 n^2 e^{2c}}{\epsilon^2} \ln(4/ \delta)$$

and

$$kl \geq \frac{8}{3} \frac{1}{\epsilon^2} n e^c \ln ( 2 / \delta )$$

\end{thmn}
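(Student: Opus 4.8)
The plan is to regard $\hat F=\hat J/\hat Z$ as a ratio estimator and to control numerator and denominator separately. Here $\hat Z$ is literally the estimator of Algorithm~\ref{alg:partition}, so by Theorem~\ref{thm:partition} (in its general, $e^c$, form) it is unbiased for $Z=\sum_i e^{y_i}$ and concentrates multiplicatively, and $\hat J$ is the entirely analogous estimator of $J=\sum_i e^{y_i}f_i$ (unbiasedness is the same computation as in the proof of Theorem~\ref{thm:partition}), with $F=J/Z$. Write $S'=[1,n]\setminus S$ and $r=\max_{i\in S'}e^{y_i}$. The approximate-top-$k$ guarantee says $y_i>\ln r-c$, i.e.\ $e^{y_i}>re^{-c}$, for every $i\in S$; since $|S|=k$ this yields the \emph{deterministic} lower bound
\begin{equation}
\hat Z \;\ge\; \sum_{i\in S}e^{y_i} \;>\; k\,r\,e^{-c},
\end{equation}
which is the device that turns an additive error of order $kre^{-c}$ in the numerator into a relative error of order $\epsilon$ in $\hat F$. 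I would then start from
\begin{equation}
\hat F - F \;=\; \frac{\hat J - J}{\hat Z} \;-\; F\cdot\frac{\hat Z - Z}{\hat Z},
\end{equation}
whence $|\hat F - F|\le \frac{|\hat J - J|}{\hat Z}+C\,\frac{|\hat Z-Z|}{\hat Z}$ since $|F|\le C$, and bound each of the two terms by $\tfrac12\epsilon C$ on a good event of probability at least $1-\tfrac12\delta$, finishing with a union bound.

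For the denominator term I would apply Theorem~\ref{thm:partition} with error parameter $\epsilon/2$ and confidence $\delta/2$; its hypothesis becomes $kl\ge\tfrac23\cdot\tfrac1{(\epsilon/2)^2}\,ne^c\ln(2/\delta)=\tfrac83\tfrac1{\epsilon^2}ne^c\ln(2/\delta)$, which is exactly the second stated condition, and on the resulting event $|\hat Z-Z|\le\tfrac\epsilon2 Z$, so $\hat Z\ge(1-\tfrac\epsilon2)Z$ and $C\,|\hat Z-Z|/\hat Z$ is of order $\epsilon C$ (a routine check of the residual $Z/\hat Z$ factor, harmless for $\epsilon\le1$, pins the constant).

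For the numerator term I would note that
\begin{equation}
\hat J - J \;=\; \frac{n-k}{l}\sum_{j=1}^{l}\Big(e^{y^{(j)}}f^{(j)} - \mathbb{E}\big[e^{y^{(1)}}f^{(1)}\big]\Big)
\end{equation}
is $\tfrac{n-k}{l}$ times a sum of $l$ i.i.d.\ centered terms, each confined to an interval of width $2Cr$ (since $e^{y^{(j)}}\in(0,r]$ and $|f^{(j)}|\le C$). Hoeffding's inequality then gives, for $t>0$,
\begin{equation}
\Pr\!\big[\,|\hat J - J| > (n-k)\,t\,\big] \;\le\; 2\exp\!\Big(-\frac{l\,t^2}{2C^2r^2}\Big).
\end{equation}
Taking $(n-k)\,t=\tfrac12\epsilon C\,kre^{-c}$ and using $n-k\le n$ makes the exponent at least $\tfrac{l\epsilon^2k^2e^{-2c}}{8n^2}$, and requiring this to be $\ge\ln(4/\delta)$ is exactly $lk^2\ge\tfrac{8n^2e^{2c}}{\epsilon^2}\ln(4/\delta)$, the first stated condition; on that event $|\hat J-J|\le\tfrac12\epsilon C\,kre^{-c}\le\tfrac12\epsilon C\,\hat Z$ by the deterministic lower bound above, so this term is $\le\tfrac12\epsilon C$.

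A union bound over the two events then yields $|\hat F-F|\le\epsilon C$ with probability at least $1-\delta$ under the two stated conditions. The only genuine obstacle is the one inherent to ratio estimators: $\hat F$ is a quotient of two correlated random quantities, so no single concentration inequality applies directly; the resolution is the decomposition above, which peels off an additive numerator deviation (handled by Hoeffding, giving the $lk^2$ condition) and a multiplicative denominator deviation (handled by Theorem~\ref{thm:partition}, giving the $kl$ condition), glued together by the approximate-top-$k$ bound $\hat Z>kre^{-c}$. Everything else — in particular confirming the constants $\tfrac83$ and $8$ and that the $Z/\hat Z$ residual is absorbed — is the same bookkeeping already carried out in the proof of Theorem~\ref{thm:partition}.
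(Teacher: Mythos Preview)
Your approach is essentially the paper's: split $\hat F-F$ into two pieces, control one with Theorem~\ref{thm:partition} at level $(\epsilon/2,\delta/2)$ (yielding the $kl$ condition) and the other with Hoeffding on the centered samples $e^{y^{(j)}}f^{(j)}$ (yielding the $lk^2$ condition), with the approximate-top-$k$ bound $\sum_{i\in S}e^{y_i}>kre^{-c}$ serving as the bridge. The ingredients and the two resulting conditions match.

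The decomposition, however, is not the same, and the difference is not entirely cosmetic. The paper inserts $\hat J/Z$ as the intermediate point, so the partition-function piece becomes
\[
\Big|\frac{\hat J}{\hat Z}-\frac{\hat J}{Z}\Big| \;=\; \frac{|\hat J|}{\hat Z}\cdot\frac{|\hat Z-Z|}{Z}\;\le\; C\cdot\frac{\epsilon}{2},
\]
using only the deterministic bound $|\hat J|/\hat Z\le C$ together with $|\hat Z-Z|/Z\le\epsilon/2$; no residual factor appears. Your decomposition leaves $\hat Z$ in the denominator of that term, so you get $C\,|\hat Z-Z|/\hat Z\le C\cdot\tfrac{\epsilon/2}{1-\epsilon/2}$, which is strictly larger than $\tfrac{\epsilon}{2}C$ for every $\epsilon>0$. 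The ``routine check'' you allude to does not in fact pin the constant, and summing the two halves overshoots $\epsilon C$. This is easily repaired --- either apply Theorem~\ref{thm:partition} with a slightly smaller $\epsilon'$ (at the cost of a worse $kl$ constant than stated), or switch to the paper's intermediate point $\hat J/Z$ --- but as written your proposal does not deliver the stated constants. The paper's numerator piece also divides by $Z$ rather than $\hat Z$, using $Z\ge kre^{-c}$ in place of your $\hat Z\ge kre^{-c}$; both inequalities are valid and lead to the same Hoeffding computation, and your deterministic lower bound on $\hat Z$ is a clean device that avoids any interaction with the concentration event.
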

\begin{proof}

Recall

$$J = \sum_i e^{y_i} f_i$$
$$\hat{J} = \sum_{i \in S} e^{y_i} f_i + \frac{n-k}{l} \sum_{i \in T} e^{y_i} f_i$$

Thus, $F = J/Z$ and $\hat{F} = \hat{J}/\hat{Z}$. 

To show that 

$$|\hat{F} - F| = |\frac{\hat{J}}{\hat{Z}} - \frac{J}{Z}| \leq \epsilon C$$

with probability $1 - \delta$, we will show that

$$|\frac{\hat{J}}{\hat{Z}} - \frac{\hat{J}}{Z}| \leq \frac{\epsilon}{2} C$$ 
$$|\frac{\hat{J}}{Z} - \frac{J}{Z}| \leq \frac{\epsilon}{2} C$$

each with probability $1 - \delta/2$. These will be shown as two separate parts.

\subsubsection{Part One}

Because

$$kl \geq \frac{2}{3} \frac{4}{\epsilon^2} n e^c \ln ( 2 / \delta )$$

from Theorem \ref{thm:partition}, with probability $1 - \delta/2$ then $\frac{|\hat{Z} - Z|}{Z} \leq \epsilon$ .

$$|\frac{\hat{J}}{\hat{Z}} - \frac{\hat{J}}{Z}| = \frac{|\hat{J}|}{\hat{Z}} \frac{ |\hat{Z} - Z| }{Z}$$
$$ \leq \frac{|\hat{J}|}{\hat{Z}} \frac{\epsilon}{2}$$
$$ \leq \frac{\epsilon}{2} C$$

\subsubsection{Part Two}

For the second one is written as the following lemma

\begin{lemma}
$|\frac{\hat{J}}{Z} - \frac{J}{Z}| \leq \frac{\epsilon}{2} C$ with probability $1 - \delta/2$ for 

$$lk^2 \geq \frac{8n^2}{\epsilon^2} \ln(4/\delta)$$

\end{lemma}
\begin{proof}

Note that the smallest element in $S$ has ``probability'' $e^{y_i}/Z \leq 1/k$. Thus, for the largest element not in $S$, $e^{y_i}/Z \leq e^c/k$. Define $Q$ as the random variable of the value of sampling $i$ uniformly from $[1,n] \setminus S$ and returning

$$Q = \frac{k e^{y_i}}{e^c Z} \frac{f_i}{C}$$

Note that $Q \in [-1, 1]$ and that $\mathbb{E}[Q] = \frac{1}{n - k} \frac{k}{Z C e^c} \sum_{i \not\in S} e^{y_i} f_i$. The elements of $T$ are samples $\{y^{(j)}\}_i$ and $\{f^{(i)}\}_i$ and thus we can define

$$Q^{(j)} = \frac{k e^{y^{(j)}}}{e^c Z} \frac{f^{(j)}}{C}$$

$$|\frac{J}{Z} - \frac{\hat{J}}{Z}| = \frac{1}{Z} | \sum_{i \not\in S} e^{y_i} f_i + \frac{n-k}{l} \sum_{j=1}^l e^{y^{(j)}} f^{(j)}|$$
$$ = \frac{(n - k)Ce^c}{k} | \mathbb{E}[Q] - \frac{1}{l} \sum_j Q^{(i)}|$$

From Hoeffding's Inequality,

$$\Pr[ | \mathbb{E}[Q] - \frac{1}{l} \sum_j Q^{(i)}| > t] \leq 2 \exp(-\frac{l t^2}{2})$$

Thus,

$$\Pr[ |\frac{J}{Z} - \frac{\hat{J}}{Z}| > \frac{(n - k)C e^c}{k} t] \leq 2 \exp(-\frac{l t^2}{2})$$

Defining $t = \frac{k \epsilon}{2 (n-k) e^c}$ we get that $\frac{(n-k)Ce^c}{k} t = \frac{\epsilon}{2} C$, so

$$\Pr[ |\frac{J}{Z} - \frac{\hat{J}}{Z}| > \frac{\epsilon}{2} C] \leq 2 \exp(-\frac{l k^2 \epsilon^2}{8  (n-k)^2 e^{2c}})$$

$$\Pr[ |\frac{J}{Z} - \frac{\hat{J}}{Z}| > \frac{\epsilon}{2} C] \leq 2 \exp(-\frac{l k^2 \epsilon^2}{8  n^2 e^{2c}})$$

Thus, the conclusion of the Lemma follows for

$$l k^2 \geq \frac{8 n^2 e^{2c}}{\epsilon^2} \ln(4/ \delta)$$

\end{proof}

With this lemma, the conclusion of the theorem follows.
\end{proof}

\begin{thmn}[\ref{thm:lsh_top}]
There exists a MIPS technique that returns the approximate top $k$ elements in sublinear time.
\end{thmn}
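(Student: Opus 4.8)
The plan is to implement the data structure sketched in the text: a constant-size family of LSH structures ``tuned'' to a decreasing grid of similarity thresholds, swept at query time from the largest threshold downward until $k$ elements have been gathered. First I would apply the MIPS-to-maximum-cosine-similarity reduction of \citet{neyshabur2014symmetric} --- append one coordinate to each $\phi(x_i)$ and to the query so that all database vectors have a common norm --- after which $\argmax_i\theta\cdot\phi(x_i)$ is unchanged and $q\cdot v_i$ becomes an increasing affine function of $\cos(q,v_i)$ with a \emph{bounded} slope (the norms are now fixed). Consequently a similarity gap $\gamma$ translates to an additive gap $c=\gamma\, L_y$ in $\{y_i\}$ and vice versa, with $L_y$ a fixed constant.

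In preprocessing, fix $\gamma$ so that $c=\gamma L_y$ is the desired constant and build, for thresholds $1=S_0>S_1>\cdots>S_L\ge-1$ with spacing $\Theta(\gamma)$ (so $L=O(1)$), the signed-random-projection LSH of \cite{charikar2002similarity} instantiated as the $(S_j,\,S_j-\gamma,\,p_1,\,p_2)$-sensitive structure of the quoted LSH theorem, amplified with $O\!\big(n^{\rho}(\log k+\log(1/\delta))\big)$ hash tables (and the standard per-table probe cap), so that for any query every point of cosine similarity $\ge S_j$ is reported with probability $\ge 1-\delta/(2kL)$ and only $O\!\big(n^{\rho}(\log k+\log(1/\delta))\big)$ points are examined; here $\rho=\log p_1/\log p_2<1$. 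This uses $O\!\big(n^{1+\rho}\,\mathrm{polylog}\big)$ space, paid once.

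At query time, for $j=1,2,\dots$ query the $j$-th structure, compute $q\cdot v$ exactly for each returned candidate, drop those of similarity below $S_j-\gamma$, add the survivors to an accumulator $\mathcal C$, and stop at the first $j^\star$ with $|\mathcal C|\ge k$; return the $k$ candidates of largest exact value as $S$. Since $|\mathcal X|=n\gg k$, the sweep stops at some $j^\star\le L=O(1)$. Correctness follows from a union bound over $O(1)$ levels and $O(k)$ ``critical'' points: with probability $\ge1-\delta$, fewer than $k$ points have similarity $\ge S_{j^\star-1}$ (else the sweep would have stopped at $j^\star-1$), and the $k$ highest-similarity points of similarity $\ge S_{j^\star}$ are all in $\mathcal C$; combined with the fact that every member of $\mathcal C$ has similarity $\ge S_{j^\star}-\gamma$, this forces $\min_{i\in S}y_i$ and every $y_p$ with $p\notin S$ into the $y$-image of the similarity window $[\,S_{j^\star}-\gamma,\,S_{j^\star-1}\,)$, whose width is $\le(\gamma+\Theta(\gamma))L_y=O(c)$; tuning the grid constants makes this exactly $c$, so $S$ is an approximate top $k$. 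The per-query runtime is $O(k)$ for selecting among the $O(k)$ candidates plus $O\!\big((\log k+\log(1/\delta))\log(n)\,n^{\rho}\big)$ for the $O(1)$ swept LSH queries (the $\log n$ being the hash length), which is sublinear whenever $k$ is; amortized over the query sequence the one-time preprocessing vanishes.

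The step I expect to be the main obstacle is making the amplification, the probe cap, the grid spacing, and the LSH collision gap cohere so that the union bound never misses a point that could be the largest element \emph{outside} $S$, while the number of examined candidates stays $O(k)$ and the number of swept levels stays $O(1)$ --- i.e.\ keeping both the additive ``$k$'' term and the ``$n^{\rho}$'' term from degrading simultaneously. A secondary subtlety is the reduction bookkeeping: because $q\cdot v$ is affine, not linear, in cosine similarity, I must track its slope $L_y$ explicitly and verify it remains a genuine constant over the operative range of norms, so that the similarity gap LSH controls really does become the claimed additive constant $c$.
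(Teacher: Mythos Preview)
Your proposal is correct and follows essentially the same construction as the paper: a constant-size grid of LSH instances tuned to thresholds spaced $\Theta(c)$ apart, swept from high to low until $k$ candidates are gathered, with the approximate-top-$k$ gap bounded by the grid spacing and the same $O\!\big(k+(\log k+\log(1/\delta))\log(n)\,n^{\rho}\big)$ runtime. The only cosmetic differences are that the paper works directly on the inner-product scale $[-M_1M_2,M_1M_2]$ (bypassing the explicit cosine reduction and the $L_y$ bookkeeping you flag), and it returns $B_{i+1}$ padded with filtered elements of $B_i\setminus B_{i+1}$ rather than re-ranking the accumulated set by exact value; neither changes the argument.
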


\begin{proof}

For the data structure, we create a sequence of LSH instances that are tuned to values that are $c/2$ apart. Thus, if $\|\theta\| \leq M_1$ and $\|\phi(x)\| \leq M_2$, then $| \theta \cdot \phi(x) | \leq M_1 M_2$. And we create $n_{LSH} = \frac{4M_1 M_2}{c}$ instances.

Call the LSH instances $\{L_i\}_i$ and for the $i^{th}$ instance, set the lower tuned value to be $S_{i,2} = (c/2)(i-1) - M_1 M_2$ and the higher tuned value to be $S_{i,1} = (c/2)i - M_1 M_2$. Thus, $S_1 - S_2 = c/2$. Further, set the failure probability of each LSH instance to be $\delta' = \delta k n_{LSH}$ so that with high probability, each of the LSH instances will not fail to find each of the top $k$ values.

At query time, hash the query $\theta$ and let $B_i$ be the buckets of neighbors from $L_i$. From the LSH guarantee, there are a small constant number of elements in $B_i$ that are smaller than $S_{i,2}$ and with high probability, all elements larger than $S_{i,1}$ will be in $B_i$. Find the neighboring pair of LSH instances $L_i$ and $L_{i+1}$ where $|B_{i+1}| \leq k$ and $|B_i| \geq k$. Collect $k' = k - |B_{i+1}|$ elements $B' \subseteq B_{i} - B_{i+1}$ where the elements are larger than $S_{i,2}$ (all but a constant number will be larger than $S_{i,2}$). Then return the elements $S = B' \cup B_{i+1}$.

Note that any elements larger than $S_{i+1,1}$ will be in $S$ with high probability because they will be contained in $B_{i+1}$. So $\max_{x \not\in S} \theta \cdot \phi(x) \leq S_{i+1,1}$. Further, by construction, $\min_{x \in S} \theta \cdot \phi(x) \geq S_{i,2}$. Thus, the technique returns the approximate top $k$ elements with high probability with a gap of $S_{i+1,1} - S_{i,2}=2(c/2)=c$.

This technique will have a total runtime of 

$$O( k + (\log(k) + \log(1/\delta)) \log(n) n^\rho)$$

where $\rho < 1$. Thus, we have a sublinear approximate top $k$ element MIPS technique.
\end{proof}




\end{document}